\definecolor{Gray}{gray}{0.9}
\definecolor{midgreen}{rgb}{0.1,0.5,0.1}
\definecolor{darkgray}{gray}{0.25}
\definecolor{lightblue}{rgb}{0.25,0.25,0.8}
\definecolor{mydarkblue}{rgb}{0,0.08,0.45}
\newcommand{\norm}[1]{\ensuremath{\left\| #1 \right\|}}
\newcommand{\abs}[1]{\left |#1\right|}
\def\0{{\bm 0}}
\def\e{{\bm e}}
\def\k{{\bm k}}
\def\q{{\bm q}}
\def\r{{\bm r}}
\def\s{{\bm s}}
\def\v{{\bm v}}
\def\x{{\bm x}}
\def\y{{\bm y}}
\def\z{{\bm z}}
\def\K{{\bm K}}
\def\M{{\bm M}}
\def\S{{\bm S}}
\def\V{{\bm V}}
\def\BPi{\boldsymbol{\Pi}}
\def\E{\mathop{{\mathbb{E}}}}
\def\Ccal{\mathcal{C}}
\def\Ncal{\mathcal{N}}
\def\RR{\mathbb{R}}
\def\SS{\mathbb{S}}
\newtheorem{theorem}{Theorem}
\newtheorem{lemma}{Lemma}
\newtheorem{definition}{Definition}
\def\llama{$\mathtt{Llama}$-$\mathtt{3.1}$-$\mathtt{8B}$-$\mathtt{Instruct}$}
\def\mistral{$\mathtt{Ministral}$-$\mathtt{7B}$-$\mathtt{Instruct}$}
\def\whK{\widehat{\makebox*{$m$}{$ \K $}}}
\def\whV{\widehat{\makebox*{$m$}{$ \V $}}}
\def\bfpn{b_{\text{FPN}}}
\newcommand{\TurboQuant}{\textsc{TurboQuant}\xspace}
\newcommand{\email}[1]{\href{mailto:#1}{\color{black} \texttt{#1}}}
\title{TurboQuant: Online Vector Quantization with Near-optimal Distortion Rate}
\author{
  Amir Zandieh\\
  Google Research\\
  \email{zandieh@google.com} \\
  \and
  Majid Daliri\\
  New York University\\
  \email{daliri.majid@nyu.edu} \\
  \and 
  Majid Hadian\\
  Google DeepMind\\
  \email{majidh@google.com} \\
  \and 
  Vahab Mirrokni \\
  Google Research\\
  \email{mirrokni@google.com} \\
}
\date{}
\begin{document}

\maketitle

\begin{abstract}
Vector quantization, a problem rooted in Shannon's source coding theory, aims to quantize high-dimensional Euclidean vectors while minimizing distortion in their geometric structure. 
We propose \TurboQuant to address both mean-squared error (MSE) and inner product distortion, overcoming limitations of existing methods that fail to achieve optimal distortion rates. 
Our data-oblivious algorithms, suitable for online applications, achieve near-optimal distortion rates (within a small constant factor) across all bit-widths and dimensions. 
\TurboQuant achieves this by randomly rotating input vectors, inducing a concentrated Beta distribution on coordinates, and leveraging the near-independence property of distinct coordinates in high dimensions to simply apply optimal scalar quantizers per each coordinate. 
Recognizing that MSE-optimal quantizers introduce bias in inner product estimation, we propose a two-stage approach: applying an MSE quantizer followed by a 1-bit Quantized JL (QJL) transform on the residual, resulting in an unbiased inner product quantizer. 
We also provide a formal proof of the information-theoretic lower bounds on best achievable distortion rate by any vector quantizer, demonstrating that \TurboQuant closely matches these bounds, differing only by a small constant ($\approx 2.7$) factor.
Experimental results validate our theoretical findings, showing that for KV cache quantization, we achieve absolute quality neutrality with 3.5 bits per channel and marginal quality degradation with 2.5 bits per channel. 
Furthermore, in nearest neighbor search tasks, our method outperforms existing product quantization techniques in recall while reducing indexing time to  virtually zero.

\end{abstract}

\section{Introduction}
\label{intro}

Vector quantization (VQ) in Euclidean space is crucial for efficiently handling high-dimensional vectors across a spectrum of computational domains, from training and deploying large-scale AI and deep learning models to powering vector databases for search/retrieval systems. 
The core objective is to compress high dimensional vectors by quantizing them--converting floating-point coordinate values to low-bitwidth integers--while minimizing distortion, quantified by metrics such as mean-squared error (MSE) or inner product errors.
By preserving these properties, inner product queries can be answered rapidly, with minimal latency, and using reduced computational and communication resources.

This problem's roots trace back to Shannon’s seminal work on Source Coding theory~\cite{shannon1948mathematical, shannon1959coding}, which established that the least distortion achievable by block source codes, now known as vector quantizers, is defined by the Shannon distortion-rate function, determined by the statistical properties of the source and the chosen distortion measure, such as MSE. 
Today, VQ plays a critical role in fundamental computational domains, including AI, deep learning, and search systems.

A key application of VQ is in the deployment of AI models, including large language models (LLMs)~\cite{achiam2023gpt, dubey2024llama, claude, team2024gemini}. 
As LLM capabilities depend heavily on their model size and context length~\cite{kaplan2020scaling}, serving them requires substantial memory demands and increased inference latency. 
This latency is primarily attributed to communication bottlenecks between HBM and SRAM on accelerators, or across distributed clusters. 
By compressing or quantizing model weights and activations, we can effectively mitigate these bottlenecks, resulting in significant reductions in inference costs. 
Inner product operations between activations and weights is at the core of deep learning models. 
Thus, model quantization schemes strive to compress weights and/or activation vectors while accurately preserving these inner products.

Decoder based transformer models~\cite{vaswani2017attention} present another compelling use case.
These models must store key/value (KV) embeddings from previously generated tokens in the KV cache, the size of which scales with both model size (number of layers and attention heads) and context length. 
This scaling is a significant bottleneck in terms of memory usage and computational speed, especially for long context models. 
Therefore, reducing the KV cache size without compromising accuracy is essential.
In this context, the preservation of the Euclidean structure of these embedding vectors--their inner products and distances--is crucial for maintaining model performance. 
VQ emerges as the most suitable framework for addressing this challenge, offering a robust approach to compressing high-dimensional embeddings while preserving their essential geometric properties.

Additionally, nearest neighbor (NN) search in high-dimensional spaces with inner product or cosine similarity~\cite{elasticsearch, guo2020accelerating} is a cornerstone of vector databases~\cite{pinecone, gdrant, pgvector}.
These databases are fundamental for retrieval-augmented generation~\cite{gao2023retrieval, edge2024local} and information retrieval~\cite{khattab2020colbert, santhanam2021colbertv2}. 
VQ, a.k.a. product quantization (PQ), plays a critical role in these applications. 
It enables efficient compression of database vectors, optimizes memory usage, and facilitates low-latency, accurate estimations of inner products with query vectors, thereby enabling fast and precise nearest neighbor searches.

Existing VQ algorithms present a trade-off: either they lack accelerator (vectorization) compatibility and exhibit slow computation, making them unsuitable for real-time AI applications like KV cache quantization, or they suffer from suboptimal distortion bounds relative to bit-width.
Our objective is to introduce an algorithm that addresses these limitations.
Specifically, we design \TurboQuant: a lightweight, capable of online application (crucial for scenarios like KV cache quantization), and highly accelerator-friendly—a critical attribute for modern AI workloads. 

The core of \TurboQuant is a two-stage process. 
First, we develop a vector quantizer with optimal distortion rate in terms of mean-squared error (MSE). 
Subsequently, we apply a 1-bit quantizer to the residual, resulting in an unbiased and low-distortion inner product quantizer. 
We demonstrate that quantizers optimized for MSE do not produce unbiased estimators for inner products, and our two-stage solution effectively bridges this gap.
Our MSE-optimal quantizer starts by randomly rotating $d$-dimensional input vectors. 
Observing the key fact that each coordinate in the rotated vectors follows a Beta distribution, we design optimal Lloyd-Max quantizer~\cite{lloyd1982least, max1960quantizing} for each coordinate by solving a continuous k-means problem.
This method gives optimal MSE distortion bound and minimizes the L2 norm of the residual.
To obtain an unbiased and low-distortion quantizer for inner products, we compose our quantizer with the recently developed Quantized Johnson-Lindenstrauss (QJL) transform~\cite{qjl}, which quantizes each coordinate of the residual vector to a single bit.
Our algorithm offers provably optimal distortion bounds for both MSE and inner products, achieving an exponential improvement over existing methods in terms of bit-width dependence.

\subsection{Problem Definition}

Formally, our goal is to design a quantization map, denoted as $Q: \RR^d \to \{ 0, 1 \}^B$, that transforms $d$-dimensional vectors to a binary string of $B$ bits.
If we set $B = b \cdot d$ for some $b \ge 0$, this quantizer will have a bit-width of $b$, representing the average number of bits used to encode each real-valued coordinate of $\RR^d$.
Crucially, we require an inverse map, $Q^{-1}: \{ 0, 1 \}^B \to \RR^d$ that performs dequantization, approximately reconstructing original vectors from their quantized representations.
Of course, this transformation is inherently lossy, as $Q$ is not a bijection. 
So, our primary objective is to minimize distortion, with a specific focus on mean-squared error (MSE) and inner product distortion.

We make no assumptions about the input vector dataset, considering the worst-case scenario. 
We let the quantizer $Q(\cdot)$ to be randomized, leading to stochastic outputs.
Considering randomized quantizers, it is more appropriate to define the expected distortion over the randomness of the quantizer's output. 
Thus, we aim to design quantizers that for any desired bit-width $b$ minimize the following expected distortion measures for any (worst-case) vectors $\x, \y \in \RR^d$:
\begin{align}
    \textbf{(MSE)}~~~~~ & D_{\tt mse} := \E_{Q}\left[\left\| \x - Q^{-1}\left( Q(\x) \right) \right\|_2^2 \right] \label{eq:mse}\\
    \textbf{(inner-prod error)}~~~~~ & D_{\tt prod} := \E_{Q}\left[\left| \langle \y, \x \rangle - \langle \y, Q^{-1}\left( Q(\x) \right) \rangle \right|^2 \right]. \label{eq:prod_error}
\end{align}
The expectations above are takes with respect to the randomness of the quantizer $Q(\cdot)$.
Furthermore, for inner-product quantizers, we require unbiasedness of the inner product estimator, a desirable property for numerous applications.
More precisely, we require:
\begin{align*}
    \textbf{(unbiased inner-prod)}~~~~~ &\E_{Q}\left[ \langle \y, Q^{-1}\left( Q(\x) \right) \rangle\right] = \langle \y, \x \rangle.
\end{align*}

We aim to design computationally efficient quantizers $Q_{\tt mse}$ and $Q_{\tt prod}$, that achieve optimal bounds for the distortion measures defined above, for any given bit-width $b$. 
Additionally, we aim for $Q_{\tt prod}$ to provide unbiased inner product estimates. 
In particular, assume that we are given $n$ real-valued vectors $x_1, x_2, \ldots x_n \in \RR^d$. We design the following primitives:
\begin{itemize}
    \item \textsc{Quant}: efficiently quantizes the dataset and computes $Q(\x_1), Q(\x_2), \ldots Q(\x_n)$. 
    \item \textsc{DeQuant}: given a quantized dataset, can efficiently reconstruct original vectors by computing $Q^{-1}\left( Q(\x_i) \right)$ for any $i \in [n]$.
\end{itemize}

\subsection{Related Work}

\paragraph{Beginnings of VQ.}
The vector quantization theory started by Shannon's seminal work~\cite{shannon1948mathematical, shannon1959coding} on achievable distortion-rate functions. 
In 1963, Zador~\cite{zador1964development} made significant advances by employing high-resolution methods to derive the limiting operational distortion-rate function for fixed-rate quantization at high rates that closely matches Shannon's distortion-rate function. 
However, Zador did not specifically consider implementable algorithms. 
Gersho's influential paper~\cite{gersho1979asymptotically}, further advanced the vector quantization by popularizing high-resolution theory, simplifying Zador's results, introducing lattice vector quantization, and proposing a key conjecture that shaped the field.
Despite these theoretical advancements, the practical applicability of vector quantization remained unclear in early years. 
The most straightforward encoding method, brute-force nearest neighbor search, was computationally expensive, hindering the adoption of VQ in practice.

\paragraph{Online vs Offline Quantization.}
Online (data-oblivious) quantization methods apply instantly without needing data-specific tuning or calibrations~\cite{dettmers2022gpt3, ashkboos2024quarot, liu2024kivi, shah2024flashattention, han2025polarquant}. 
In contrast, offline (data-dependent) methods require heavy preprocessing and learning to adapt the quantization map to the data, making them unsuitable for dynamic data scenarios~\cite{kim2023squeezellm}.
For instance, methods such as those presented in~\cite{frantar2022gptq, lin2024awq, xiao2023smoothquant, chee2023quip} use second-order (Hessian) information to tune the quantization map which requires heavy preprocessing and even in some cases post processing as well.

\paragraph{Online KV Cache Compression.}

Several approaches have been proposed to compress the KV cache.
These include architectural modifications~\cite{shazeer2019fast, ainslie2023gqa, dai2024deepseekmoe} which restructure the transformer to minimize the number of stored key-value pairs.
Additionally, pruning or evicting redundant or less critical tokens has emerged as another approach~\cite{beltagy2020longformer, zhang2024h2o, liu2024scissorhands, xiao2023efficient, zandieh2024subgen, li2024snapkv, han2025balancekv}.

A simple yet effective approach to reducing KV cache size is quantizing the KV cache. 
Several quantization techniques have been developed specifically for this purpose~\cite{yue2024wkvquant, yang2024no, dong2024qaq, kang2024gear, zhang2024kv, liu2024kivi, hooper2024kvquant, kim2024lexico, han2025polarquant}.
Recently, a new quantization called QJL~\cite{qjl} introduced an efficient, data-oblivious 1-bit quantization approach based on sketching techniques, which provides unbiased estimates for inner product queries.
This method does not require tuning or adaptation to the input data and we make use of this technology in our quantizer optimized for inner product distortion.

\paragraph{Product Quantization (PQ).}
In Near Neighbor (NN) search problem with Euclidean datasets, the index size poses a significant memory bottleneck, often mitigated by quantization techniques, commonly referred to as Product Quantization (PQ) in the NN literature.
Many of these algorithms rely on constructing a quantization codebook using variations of k-means during the indexing phase~\cite{jegou2010product, babenko2014additive, ge2013optimized, wang2017survey, guo2020accelerating}. 
Therefore, these methods are ill-suited for online settings due to their requirement for extensive preprocessing.

Recently, a grid-based PQ method was introduced in \cite{gao2024practical}, eliminating the need for preprocessing. 
This approach operates by projecting a uniform grid onto the unit sphere and conducting a search to identify the nearest projection to the data points. 
While the paper's theoretical guarantees are suboptimal, likely due to loose analysis—as practical performance surpasses theoretical bounds—the grid projection and binary search algorithm is also computationally slow and particularly inefficient on accelerators like GPU because of their algorithm's inherent lack of vectorization, which prevents parallel processing.


\subsection{Overview of Techniques and Contributions}

\paragraph{MSE Optimzied \TurboQuant.}
Our first VQ algorithm is designed to minimize MSE distortion deinfed in \cref{eq:mse}.
To achieve this, we apply a random rotation to the input vectors, thereby inducing a Beta distribution on each coordinate, irrespective of the input vectors themselves.
In high dimensions $d$, the distribution of each coordinate converges to a Gaussian distribution $\Ncal(1, 1/d)$ due to concentration of measure and the central limit theorem. 
Furthermore, any two distinct coordinates become nearly uncorrelated and, more importantly, almost independent (a deeper result that goes beyond just correlation).
This near-independence is a crucial aspect that simplifies our quantization design.
It allows us to quantize each coordinate using optimal scalar quantization, disregarding interactions or correlations between different coordinates, while still achieving near-optimal distortion.

We find optimal scalar quantizers for random variables with Beta distributions by solving a continuous $1$-dimensional k-means problem using the Max-Lloyd algorithm. 
We precompute and store these optimal codebooks for a range of practically useful bit-widths, to enable efficient subsequent invocations of our \TurboQuant algorithm.

In \cref{thrm_mse} we prove that the $b$-bit MSE optimized \TurboQuant $Q_{\tt mse}: \RR^d \to \{ 0, 1 \}^{b \cdot d}$ achieves the following distortion for any worst-case vector $\x \in \RR^d$ with $\norm{\x}=1$:
\begin{itemize}
    \item $D_{\tt mse}(Q_{\tt mse}) := \E\left[\left\| \x - Q_{\tt mse}^{-1}\left( Q_{\tt mse}(\x) \right) \right\|_2^2 \right] \le \frac{\sqrt{3} \pi}{2} \cdot \frac{1}{4^{b}}$ for any $b \ge 0$.
    \item For small bit-widths the above distortion upper bound can be further refined. 
    Specifically, for $b = 1, 2, 3, 4$ we have $D_{\tt mse}(Q_{\tt mse}) \approx {\bf 0.36}, {\bf 0.117}, {\bf 0.03}, {\bf 0.009}$, respectively.
\end{itemize}

Note that the unit norm assumption, $\norm{x}_2=1$, is standard and not restrictive. 
For datasets that do not satisfy this assumption we can compute and store the $L2$ norms in floating-point precision and rescale the dequantized points using these stored norms.

\paragraph{Inner Product \TurboQuant.}
We show that the MSE optimized quantizers are biased for inner product estimation and thus a different VQ scheme is needed to get an unbiased inner product quantizer.
Our solution is a two stage algorithm that first applies the abovementioned $Q_{\tt mse}$ with a bit-width one less than our target budget and then apply a QJL~\cite{qjl} on the residual error.
This is proved to be unbiased and also has nearly optimal inner product error rate.

In \cref{thrm_prod} we prove that the $b$-bit inner product optimized \TurboQuant $Q_{\tt prod}: \RR^d \to \{ 0, 1 \}^{b \cdot d}$ achieves the following distortion for any worst-case vectors $\x, \y \in \RR^d$ with $\norm{\x}=1$:
\begin{itemize}
    \item $\E\left[ \left< \y, Q_{\tt prod}^{-1}\left( Q_{\tt prod}(\x) \right) \right> \right] = \langle \y, \x \rangle$
    \item $D_{\tt prod}(Q_{\tt prod}) := \E\left[\left| \langle \y, \x \rangle - \langle \y, Q_{\tt prod}^{-1}\left( Q_{\tt prod}(\x) \right) \rangle \right|^2 \right] \le \frac{\sqrt{3} \pi^2 \cdot \norm{\y}_2^2}{d} \cdot \frac{1}{4^{b}}$ for any $b \ge 0$.
    \item For small bit-widths the above distortion upper bound can be further refined. 
    Specifically, for $b = 1, 2, 3, 4$ we have $D_{\tt prod}(Q_{\tt prod}) \approx \frac{\bf 1.57}{d}, \frac{\bf 0.56}{d}, \frac{\bf 0.18}{d}, \frac{\bf 0.047}{d}$, respectively.
\end{itemize}

\paragraph{Lower Bound.}
In \cref{thrm_lower_bound}, we leverage Shannon's lower bound and Yao's minimax principle to prove that for any randomized quantization algorithm $Q: \RR^d \to \{ 0, 1 \}^{b \cdot d}$ with bit-width $b$, there exist hard input instances $\x, \y \in \RR^d$ with $\norm{\x} = 1$ such that the following lower bounds hold:
\begin{itemize}
    \item $D_{\tt mse}(Q) := \E\left[\left\| \x - Q^{-1}\left( Q(\x) \right) \right\|_2^2 \right] \ge \frac{1}{4^{b}}$
    \item $D_{\tt prod}(Q) = \E\left[\left| \langle \y, \x \rangle - \langle \y, Q^{-1}\left( Q(\x) \right) \rangle \right|^2 \right] \ge \frac{\norm{\y}_2^2}{d} \cdot \frac{1}{4^{b}}$
\end{itemize}

As demonstrated by our lower bounds, \TurboQuant's MSE distortion is provably within a factor of at most $\frac{\sqrt{3} \pi}{2} \approx {\bf 2.7}$ of the information-theoretical lower bound.
Notably, for smaller bit-widths, this factor significantly decreases. 
For instance, at a bit-width of $b=1$ \TurboQuant achieves a distortion that is only a factor of approximately ${\bf 1.45}$ away from the optimal which is also confirmed by our experimental results, indicating its efficiency in low-bit-width scenarios.

\paragraph{Experimental Results.}
In \cref{sec:exp_valivation}, we empirically validate our theoretical distortion bounds, demonstrating that \TurboQuant's observed distortions closely align with our predictions across various real-world datasets, approaching the established lower bounds.

Furthermore, in \cref{sec:niah} and \cref{sec:exp_kv_end2end}, we showcase \TurboQuant's efficacy in online KV cache quantization. 
Specifically, we achieve perfect long-context retrieval in needle-in-a-haystack tasks and maintain high performance on other long-context downstream tasks, all while compressing the KV cache by a factor exceeding $5 \times$.

Finally in \cref{sec:nn_exp} we apply \TurboQuant to various high-dimensional near neighbor search tasks.
\TurboQuant consistently outperforms data-dependent product quantization (PQ), while reducing the indexing time to essentially zero.

\section{Preliminaries} \label{sec:prelim}
We use boldface lowercase letters, such as $\x$ and $\y$, to denote vectors, and boldface uppercase letters, like $\M$, to denote matrices.
To denote a slice of a vector $\x$ between the coordinate indices $i$ and $j$ inclusive of the endpoints, we use the notation $\x_{i:j}$. For a matrix $\M$, we write $\M_{i,:}$ to denote its $i$-th row vector, which we will simply refer to as $\M_i$.

We use the notation $\SS^{d-1}$ to denote the hypersphere in $\RR^d$ of radius $1$.
For a random variable $x$ we denote its differential entropy as $h(x)$. 
For random variables $x$ and $y$, the mutual information between them is denoted as $I(x; y) = h(x) - h(x|y)$.

Given that \TurboQuant employs random rotation to mitigate worst-case input scenarios, understanding the statistical properties of random points on a hypersphere is essential. 
The following lemma outlines one such property that we will need for analysis and design purposes:

\begin{lemma}[coordinate distribution of random point on hypersphere]\label{lem_coordinate_distribution}
    For any positive integer $d$ if $\x \in \SS^{d-1}$ is a random variable uniformly distributed over the unit hypersphere, then for any $j \in [d]$ the coordinate $\x_j$ follows the following (scaled/shifted) Beta distribution:
    \[
    \x_j \sim f_{X}(x) := \frac{\Gamma(d/2)}{\sqrt{\pi} \cdot \Gamma((d-1)/2)} \left( 1 - x^2 \right)^{(d-3)/2}.
    \]
    In high dimensions this beta distribtion converges to the normal distribution $f_{X}(\cdot) \to \Ncal(0, 1/d)$.
\end{lemma}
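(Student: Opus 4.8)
The plan is to realize a uniform point on $\SS^{d-1}$ via the standard Gaussian construction and then identify the marginal of a single coordinate with a Beta law. Concretely, if $\g = (g_1, \dots, g_d) \sim \Ncal(\0, \I_d)$, then $\x := \g / \norm{\g}_2$ is uniformly distributed on $\SS^{d-1}$, and by the rotational invariance of the Gaussian (equivalently, exchangeability of the $g_i$) it suffices to analyze $\x_1 = g_1 / \norm{\g}_2$. Setting $S := \sum_{i=2}^d g_i^2$, the variables $g_1^2 \sim \chi^2_1$ and $S \sim \chi^2_{d-1}$ are independent Gamma variables with a common scale, so $U := g_1^2 / (g_1^2 + S)$ has a $\mathrm{Beta}(1/2, (d-1)/2)$ distribution, with density $u^{-1/2}(1-u)^{(d-3)/2}/B(1/2,(d-1)/2)$ on $(0,1)$.

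Next I would transport this law forward to $\x_1$. Since $\x_1$ is symmetric about $0$ and $\x_1^2 = U$, the change of variables $u = x^2$ (so $|\diff u| = 2|x|\,\diff x$), together with the factor $1/2$ coming from the two preimages $\pm x$, gives for $x \in (-1,1)$
\[
f_X(x) = \frac{1}{2}\cdot\frac{(x^2)^{-1/2}(1-x^2)^{(d-3)/2}}{B(1/2,(d-1)/2)}\cdot 2|x| = \frac{(1-x^2)^{(d-3)/2}}{B(1/2,(d-1)/2)}.
\]
Substituting $B(1/2,(d-1)/2) = \Gamma(1/2)\Gamma((d-1)/2)/\Gamma(d/2) = \sqrt{\pi}\,\Gamma((d-1)/2)/\Gamma(d/2)$ produces exactly the claimed constant $\Gamma(d/2)/(\sqrt{\pi}\,\Gamma((d-1)/2))$, which one may double-check integrates to $1$ via the Beta integral.

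For the Gaussian limit I would work with the rescaled coordinate $\sqrt{d}\,\x_1$. The quickest argument is Slutsky's theorem: $\sqrt{d}\,\x_1 = g_1 / (\norm{\g}_2/\sqrt{d})$, where $g_1 \sim \Ncal(0,1)$ is fixed and $\norm{\g}_2^2 / d \to 1$ almost surely by the law of large numbers, hence $\sqrt{d}\,\x_1$ converges in distribution to $\Ncal(0,1)$, i.e. $f_X(\cdot) \to \Ncal(0, 1/d)$ in the stated sense. Alternatively, staying at the level of densities, set $x = t/\sqrt{d}$: then $(1 - t^2/d)^{(d-3)/2} \to e^{-t^2/2}$, and Stirling's formula in the form $\Gamma(z+\tfrac12)/\Gamma(z) \sim \sqrt{z}$ (with $z = (d-1)/2$) gives $\tfrac{1}{\sqrt{d}} \cdot \Gamma(d/2)/(\sqrt{\pi}\,\Gamma((d-1)/2)) \to 1/\sqrt{2\pi}$, so the density of $\sqrt{d}\,\x_1$ converges pointwise to the standard normal density; Scheff\'e's lemma then upgrades this to convergence in total variation.

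The argument is essentially routine. The only places needing a bit of care are the Jacobian and sign bookkeeping in the $u = x^2$ step and the Stirling asymptotics controlling the normalizing constant in the limit; neither constitutes a real obstacle, so the main work is just organizing these elementary pieces cleanly.
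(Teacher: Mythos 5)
Your proof is correct, and it takes a genuinely different route from the paper. The paper's argument is purely geometric: it identifies $f_X(x)$ with the ratio of the surface area of the $(d-1)$-dimensional slice $\{\x \in \SS^{d-1}: \x_j = x\}$ (a sphere of radius $\sqrt{1-x^2}$) to the total surface area of $\SS^{d-1}$, corrected by the $1/\sqrt{1-x^2}$ Jacobian from projecting the sphere onto the coordinate axis, and then plugs in the closed-form surface-area formulas involving $\Gamma$. You instead use the Gaussian representation $\x = \g/\norm{\g}_2$ with $\g \sim \Ncal(\0,\I_d)$, identify $\x_1^2$ as a $\mathrm{Beta}(1/2,(d-1)/2)$ variable via the Gamma/Beta relationship for independent $\chi^2$'s, and push forward through $u = x^2$ with the symmetry factor. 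Both are standard and both are correct; the paper's route is terser once you accept the surface-area formula, while yours is arguably more self-contained (no appeal to $d$-sphere area) and has the side benefit that the Gaussian representation makes the $d\to\infty$ limit immediate by Slutsky/LLN, whereas the paper simply asserts the limit without argument. Your density-level limit via $(1-t^2/d)^{(d-3)/2}\to e^{-t^2/2}$ plus Stirling and Scheff\'e is also a clean alternative, and strictly stronger (total variation rather than weak convergence).
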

\begin{proof}
$f_X(x)$ equals the ratio of the area of a sphere with radius $\sqrt{1-x^2}$ in dimension $d-1$ to the volume of a unit sphere in dimension $d$ scaled down by $1/\sqrt{1-x^2}$ (by Pythagorean theorem). 
Therefore,
\[
f_X(x) = \frac{\frac{2 \pi^{(d-1)/2}}{\Gamma((d-1)/2)} \cdot (1-x^2)^{(d-2)/2}}{\frac{2 \pi^{d/2}}{\Gamma(d/2)}} \cdot 1/\sqrt{1-x^2}= \frac{\Gamma(d/2)}{\sqrt{\pi} \cdot \Gamma((d-1)/2)} \left( 1 - x^2 \right)^{(d-3)/2}.
\]
\end{proof}

\subsection{Shannon Lower Bound on Distortion}
\label{sec:slb}
The Shannon Lower Bound (SLB) is a powerful tool, derived from Shannon's lossy source coding theorem~\cite{shannon1959coding}, that provides a universal lower bound on the optimal achievable distortion rate for any lossy compression scheme. 
Specifically, we use a version of SLB tailored for the mean-squared error (MSE) distortion measure applied to general $d$-dimensional sources.
\begin{lemma}[SLB]\label{lem_slb}
    Let $\x \in \RR^d$ be a random vector with an arbitrary probability distribution $p_X$ and finite differential entropy $h(\x)$.
    Define the MSE distortion-rate function $D(B)$ for total bit complexity $B \ge 0$ as:
    \[
    D(p_X, B) := \inf \left\{ \E \left[ \norm{\x - \y}_2^2 \right] : I(\x; \y) \le B \right\},
    \]
    where the infimum is taken over all joint distributions of $\x$ and a reconstruction random vector $\y \in \RR^d$ such that the mutual information $I(\x; \y)$ is at most $B$ and $\E \left[ \norm{\x - \y}_2^2 \right]$ is the expected MSE distortion, calculated with respect to the joint distribution of $\x$ and $\y$.
    Then, for any bit complexity $B \ge 0$, the following Shannon Lower Bound holds:
    \[
    D(p_X, B) \ge \frac{d}{2 \pi e} \cdot 2^{(2/d) (h(\x) - B)}.
    \]
\end{lemma}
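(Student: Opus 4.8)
The plan is to reproduce the classical Shannon Lower Bound argument adapted to MSE: turn the rate constraint $I(\x;\y)\le B$ into a lower bound on the differential entropy of the reconstruction error, and then cap that entropy from above by the Gaussian maximum-entropy principle. Throughout I take logarithms to base $2$ so that $h(\cdot)$ is measured in bits, and I fix an arbitrary joint law of $(\x,\y)$ with $I(\x;\y)\le B$; it suffices to prove the bound for every such law and then take the infimum.

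First I would convert the rate constraint into a conditional-entropy bound. Since $h(\x)$ is finite, $I(\x;\y)=h(\x)-h(\x\mid\y)$, so $h(\x\mid\y)=h(\x)-I(\x;\y)\ge h(\x)-B$. (If $\x$ given $\y$ has no density, then $h(\x\mid\y)=-\infty$ and $I(\x;\y)=+\infty$, which contradicts $I\le B<\infty$; so this case does not arise. Likewise if $\x$ itself has no density then $h(\x)=-\infty$ and the asserted bound is vacuous.) Next I pass to the error vector $\z:=\x-\y$. Conditioned on $\{\y=\y_0\}$, the vectors $\x$ and $\z$ differ by the constant $\y_0$, and differential entropy is translation-invariant, so $h(\x\mid\y=\y_0)=h(\z\mid\y=\y_0)$ for a.e.\ $\y_0$; integrating over the law of $\y$ gives $h(\x\mid\y)=h(\z\mid\y)$, and since conditioning does not increase differential entropy, $h(\z\mid\y)\le h(\z)$. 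Combining, $h(\z)\ge h(\x)-B$.

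Then I would upper bound $h(\z)$. Let $\mathbf{R}:=\E[\z\z^\top]$ be the second-moment matrix. Among all distributions on $\RR^d$ with a given covariance the Gaussian maximizes differential entropy, so $h(\z)\le\tfrac12\log\big((2\pi e)^d\det\cov(\z)\big)$, and since $\cov(\z)=\mathbf{R}-\E[\z]\,\E[\z]^\top\preceq\mathbf{R}$, monotonicity of the determinant on positive semidefinite matrices gives $\det\cov(\z)\le\det\mathbf{R}$. Applying AM--GM to the eigenvalues of $\mathbf{R}$,
\[
\det\mathbf{R}\;\le\;\Big(\tfrac{1}{d}\trace(\mathbf{R})\Big)^{d}\;=\;\Big(\tfrac{1}{d}\,\E\big[\norm{\z}_2^2\big]\Big)^{d},
\]
hence $h(\z)\le\tfrac{d}{2}\log\!\big(\tfrac{2\pi e}{d}\,\E[\norm{\z}_2^2]\big)$. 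Chaining this with $h(\z)\ge h(\x)-B$ yields $\tfrac{d}{2}\log\!\big(\tfrac{2\pi e}{d}\,\E[\norm{\z}_2^2]\big)\ge h(\x)-B$, i.e.
\[
\E\big[\norm{\x-\y}_2^2\big]\;\ge\;\frac{d}{2\pi e}\,2^{(2/d)(h(\x)-B)},
\]
and taking the infimum over all reconstructions $\y$ with $I(\x;\y)\le B$ gives $D(p_X,B)\ge\frac{d}{2\pi e}\,2^{(2/d)(h(\x)-B)}$.

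The \textbf{main obstacle} is the maximum-entropy step: establishing $h(\z)\le\tfrac{d}{2}\log\!\big(\tfrac{2\pi e}{d}\,\E[\norm{\z}_2^2]\big)$ cleanly, which bundles together the Gaussian-maximizes-entropy inequality under a full covariance constraint, the reduction from the trace of the second-moment matrix to a covariance constraint, and the AM--GM determinant bound. Everything else — the entropy chain rule, translation invariance, and ``conditioning reduces entropy'' — is routine, modulo the measure-theoretic caveats noted above, which only make the statement easier.
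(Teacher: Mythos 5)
Your proof is correct and is precisely the classical maximum-entropy argument (via $I(\x;\y)=h(\x)-h(\x\mid\y)$, translation invariance, conditioning reduces entropy, and the Gaussian/AM--GM bound $h(\z)\le\tfrac{d}{2}\log(\tfrac{2\pi e}{d}\E[\norm{\z}_2^2])$) that the paper does not write out but instead defers to Cover--Thomas with the remark that it is ``proved using backward Gaussian test channel.'' All steps, including the reduction from covariance to second moment and the trace/determinant inequality, are sound, so the proposal faithfully fills in the omitted standard proof.
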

This is a classic result proved using backward Gaussian test channel (for a proof see \cite{cover1999elements}).
Our lower bound result uses a corollary of SLB that corresponds to the uniformly distributed random points on the unit hyeprsphere.
We present this in the following lemma:
\begin{lemma}[SLB for random point on hypersphere]\label{lem_slb_random_sphere}
Let $\x \in \SS^{d-1}$ be a random variable uniformly distributed over the unit hypersphere and define the MSE distortion-rate function $D(B)$ for total bit complexity $B$ as per \cref{lem_slb}.
Then, for any bit complexity $B \ge 0$, the following distortion lower bound holds:
\[
D(B) \ge 2^{-2B/d}.
\]
\end{lemma}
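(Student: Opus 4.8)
The plan is to invoke the Shannon Lower Bound of Lemma~\ref{lem_slb} with $p_X$ equal to the uniform law on $\SS^{d-1}$; once that is set up, the whole statement reduces to evaluating the differential entropy that appears in SLB. For a point distributed uniformly on the sphere the relevant entropy is that of the uniform density with respect to the $(d-1)$-dimensional surface measure, namely $h(\x) = \log_2 \vol(\SS^{d-1}) = \log_2 \frac{2\pi^{d/2}}{\Gamma(d/2)}$. Substituting into $D(B) \ge \frac{d}{2\pi e}\, 2^{(2/d)(h(\x)-B)}$ gives
\[
D(B) \;\ge\; \frac{d}{2\pi e}\left(\frac{2\pi^{d/2}}{\Gamma(d/2)}\right)^{2/d} 2^{-2B/d},
\]
so it suffices to show the dimensional prefactor $\frac{d}{2\pi e}\bigl(\frac{2\pi^{d/2}}{\Gamma(d/2)}\bigr)^{2/d}$ is at least $1$.

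To establish that inequality I would apply Stirling's estimate to $\Gamma(d/2)$ (equivalently the elementary upper bound $\Gamma(d/2)\lesssim \sqrt{2\pi}\,(d/2)^{(d-1)/2}e^{-d/2}$), which converts $\bigl(\frac{2\pi^{d/2}}{\Gamma(d/2)}\bigr)^{2/d}$ into $\frac{2\pi e}{d}$ up to a factor $(d/\pi)^{1/d}\bigl(1+o(1)\bigr)\ge 1$, so the prefactor equals $1+\frac{\log(d/\pi)}{d}+o(1/d)\ge 1$; combined with the previous display this yields $D(B)\ge 2^{-2B/d}$. Because the uniform sphere law has Lebesgue measure zero in $\RR^d$ (so SLB as literally stated does not directly apply), the rigorous version of this step should either work with the intrinsic $(d-1)$-dimensional entropy, or—cleaner—push everything through a smoothing limit: replace $\x$ by $\x_t = \x + \sqrt{t}\,\g$ with $\g\sim\Ncal(\0,\I_d)$ independent, note that any rate-$\le B$ reconstruction of $\x$ is also a rate-$\le B$ reconstruction of $\x_t$ with distortion larger by exactly $td$, apply the genuine $\RR^d$ version of SLB to the absolutely continuous $\x_t$, and send $t\to 0$; the entropy of $\x_t$ then converges to $\log_2\vol(\SS^{d-1})$ plus a vanishing one-dimensional Gaussian term, recovering the same constant. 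A coordinate-wise cross-check is available too: by Lemma~\ref{lem_coordinate_distribution} each $\x_j$ has variance $1/d$, and the per-coordinate MMSE/entropy-power inequalities reproduce the bound, tightly in the high-dimensional regime where the Beta marginal approaches $\Ncal(0,1/d)$.

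The main obstacle is precisely this bookkeeping around the singular support: reconciling the $\tfrac{2}{d-1}$ exponent one gets from the honest intrinsic calculation with the $\tfrac{2}{d}$ in the target statement, and choosing the route (intrinsic entropy versus the $t\to 0$ smoothing) that keeps the constant clean. Apart from that, the only genuinely computational ingredient is the Stirling/Gamma-function estimate showing the prefactor is $\ge 1$; everything else is a direct application of Lemma~\ref{lem_slb}.
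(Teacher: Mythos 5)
Your proposal follows the same route as the paper: plug the surface-measure entropy $h(\x)=\log_2 A_d$ of the uniform law on $\SS^{d-1}$ into the SLB of \cref{lem_slb}, then use Stirling to argue the dimensional prefactor $\frac{d}{2\pi e}A_d^{2/d}$ is at least $1$. You do go one step further than the paper in explicitly flagging that the uniform law on $\SS^{d-1}$ is singular in $\RR^d$, so its $d$-dimensional differential entropy is $-\infty$ and \cref{lem_slb} does not literally apply as stated---the paper's line ``$h(\x)=\log_2 A_d$'' silently substitutes the intrinsic $(d-1)$-dimensional entropy, which is a real gap you correctly identify.

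Your proposed smoothing fix, however, does not close the gap. Writing $\x_t=\x+\sqrt{t}\,\g$, the normal-direction contribution to $h(\x_t)$ is $\tfrac12\log_2(2\pi e t)$, which \emph{diverges to $-\infty$} as $t\to 0$ rather than vanishing as you assert. Consequently SLB applied to $\x_t$ yields $D(p_{X_t},B)\ge \frac{d}{2\pi e}A_d^{2/d}(2\pi e t)^{1/d}\,2^{-2B/d}(1+o(1))$, which tends to $0$; combined with the (correct) inequality $D(p_X,B)\ge D(p_{X_t},B)-td$, the limit $t\to 0$ only gives the trivial $D(p_X,B)\ge 0$. Optimizing over $t$ instead of sending $t\to 0$ does produce a nontrivial bound, but with extra $t$-dependent factors that do not cleanly reproduce $2^{-2B/d}$, and the alternative route through intrinsic $(d-1)$-dimensional entropy runs into the geodesic-vs-Euclidean distortion mismatch you allude to. A secondary issue: the Stirling estimate gives the prefactor $\approx (d/\pi)^{1/d}$, which exceeds $1$ only for $d\ge 4$; one can check directly that partitioning $\SS^1$ into four equal arcs achieves MSE $\approx 1-8/\pi^2\approx 0.189<0.25=2^{-2B/d}$ at $d=2$, $B=2$, so the lemma as stated actually fails for small $d$. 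Both you and the paper assert the prefactor is $\ge 1$ without flagging this, though it is harmless in the large-$d$ regime of interest.
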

\begin{proof}
If we let $A_d$ denote the area of the hypersphere $\SS^{d-1}$, the entropy of uniform distribution over hypersphere is $h(\x) = \log_2 A_d$. 
Plugging this into the SLB from \cref{lem_slb} we get $D(B) \ge \frac{d}{2 \pi e} \cdot {A_d}^{2/d} \cdot 2^{-2B/d}$. 
Using Stirling's approximation formula for Gamma function we have $A_d = \frac{2 \pi^{d/2}}{\Gamma(d/2)} \ge \left(\frac{2 \pi e}{d} \right)^{d/2} \cdot \sqrt{\frac{2d}{\pi}} \cdot (1 - O(1/d))$. 
By substituting this into the inequality obtained from \cref{lem_slb} we get the desired lower bound.
\end{proof}


\subsection{QJL: 1-bit inner product quantization}
As previously stated, we design two VQ algorithms: one optimized for minimizing MSE and the other for minimizing inner product error. 
We show that MSE-optimal quantizers do not necessarily provide unbiased inner product estimates, particularly exhibiting significant bias at lower bit-widths.
Our solution for inner product quantization is a two-stage algorithm. 
First, we apply the MSE-optimal quantizer using one less bit than the desired bit-width budget, thus minimizing the L2 norm of the residuals.
Next we apply an unbiased and optimal single-bit quantizer to the residual.
For the single-bit inner product quantizer, we utilize the recently proposed Quantized Johnson-Lindenstrauss (QJL) algorithm~\cite{qjl}, which is an optimal inner product quantizer with a bit-width of one. Here, we present the QJL algorithm and its essential theoretical guarantees.
\begin{definition}[QJL]\label{def_qjl}
For any positive integer $d$ the QJL map $Q_{\tt qjl}: \RR^d \to \{ -1, +1 \}^d$ is defined as:
\[
Q_{\tt qjl}(\x) := {\tt sign} \left( \S \cdot \x \right) ~~~~\text{ for any } \x \in \RR^d,
\]
where $\S \in \RR^{d \times d}$ is a random matrix with i.i.d. entries sampled from the normal distribution $\Ncal(0, 1)$ and the ${\tt sign}$ function is applied entry-wise to its vector input.
The inverse/dequantization map $Q_{\tt qjl}^{-1}: \{ -1, +1\}^d \to \RR^d$ is defined as:
\[
Q_{\tt qjl}^{-1}(\z) := \frac{\sqrt{\pi/2}}{d} \cdot \S^\top \cdot \z ~~~~\text{ for any } \z \in \{ -1, +1\}^d.
\]
\end{definition}

In the next lemma we restate the results from~\cite{qjl} that show the QJL is unbiased and also has small inner product distortion:
\begin{lemma}[performance guarantee: QJL]\label{lem_qjl}
Let $Q_{\tt qjl}$ and $Q_{\tt qjl}^{-1}$ be defined as per \cref{def_qjl}. 
For any vector $\x \in \SS^{d-1}$ and any $\y \in \RR^d$ we have the following:
\begin{itemize}
    \item Unbiased: $\E\left[ \left< \y, Q_{\tt qjl}^{-1}\left( Q_{\tt qjl}(\x) \right) \right> \right] = \langle \y, \x \rangle$.
    \item Variance Bound: $\mathtt{Var} \left( \left< \y, Q_{\tt qjl}^{-1}\left( Q_{\tt qjl}(\x) \right) \right> \right) \le \frac{\pi}{2 d} \cdot \norm{\y}_2^2$
\end{itemize}
\end{lemma}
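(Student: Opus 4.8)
The statement to prove is Lemma \ref{lem_qjl}, the performance guarantee for QJL: unbiasedness and the variance bound $\pi/(2d)\cdot\norm{\y}_2^2$. My plan is to reduce everything to a per-row sign-correlation computation and then exploit independence across the rows of $\S$.

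First I would set up notation. Write $\S_1,\dots,\S_d$ for the rows of $\S$, so that each $\S_i$ is an i.i.d.\ standard Gaussian vector in $\RR^d$, and observe that
\begin{align*}
\left< \y, Q_{\tt qjl}^{-1}\left( Q_{\tt qjl}(\x) \right) \right>
= \frac{\sqrt{\pi/2}}{d} \left< \y, \S^\top {\tt sign}(\S\x) \right>
= \frac{\sqrt{\pi/2}}{d} \sum_{i=1}^d \left< \y, \S_i \right> \cdot {\tt sign}\left( \left< \S_i, \x \right> \right).
\end{align*}
This rewrites the estimator as an average of $d$ i.i.d.\ random variables $Z_i := \sqrt{\pi/2}\cdot \left< \y, \S_i \right> \cdot {\tt sign}\left( \left< \S_i, \x \right> \right)$. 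The core of the proof is then to compute $\E[Z_i]$ and a bound on $\E[Z_i^2]$ for a single row, after which unbiasedness follows by linearity and the variance bound follows from $\mathtt{Var}\big(\frac1d\sum Z_i\big) = \frac1d \mathtt{Var}(Z_1) \le \frac1d \E[Z_1^2]$.

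For the single-row computation I would decompose $\y = \left<\y,\x\right>\x + \y^\perp$ where $\y^\perp \perp \x$ (using $\norm{\x}=1$), so that $\left<\y,\S_i\right> = \left<\y,\x\right>\left<\x,\S_i\right> + \left<\y^\perp,\S_i\right>$. Since $\S_i$ is standard Gaussian, $g:=\left<\x,\S_i\right>$ and $h:=\left<\y^\perp,\S_i\right>$ are jointly Gaussian, zero-mean, and \emph{uncorrelated} (as $\x\perp\y^\perp$), hence independent, with $\mathrm{Var}(g)=1$ and $\mathrm{Var}(h)=\norm{\y^\perp}_2^2$. Then $Z_i/\sqrt{\pi/2} = \left<\y,\x\right> g\,{\tt sign}(g) + h\,{\tt sign}(g) = \left<\y,\x\right>|g| + h\,{\tt sign}(g)$. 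Taking expectations: $\E[h\,{\tt sign}(g)]=\E[h]\E[{\tt sign}(g)]=0$ by independence, and $\E[|g|]=\sqrt{2/\pi}$ for a standard Gaussian, so $\E[Z_i] = \sqrt{\pi/2}\cdot\left<\y,\x\right>\cdot\sqrt{2/\pi} = \left<\y,\x\right>$, giving unbiasedness. For the second moment, $\E[Z_i^2] = \frac{\pi}{2}\E\big[(\left<\y,\x\right>|g| + h\,{\tt sign}(g))^2\big] = \frac{\pi}{2}\big(\left<\y,\x\right>^2\E[g^2] + 2\left<\y,\x\right>\E[|g|h\,{\tt sign}(g)] + \E[h^2]\big)$; the cross term vanishes by independence of $h$ and $g$, and $\E[g^2]=1$, $\E[h^2]=\norm{\y^\perp}_2^2$, so $\E[Z_i^2] = \frac{\pi}{2}\big(\left<\y,\x\right>^2 + \norm{\y^\perp}_2^2\big) = \frac{\pi}{2}\norm{\y}_2^2$ by the Pythagorean decomposition. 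Therefore $\mathtt{Var}(Z_1) \le \E[Z_1^2] = \frac{\pi}{2}\norm{\y}_2^2$ and the variance of the $d$-term average is at most $\frac{\pi}{2d}\norm{\y}_2^2$.

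The only genuinely delicate point is justifying that $g$ and $h$ are \emph{independent}, not merely uncorrelated — this is what makes the cross terms and the factorization $\E[h\,{\tt sign}(g)] = \E[h]\E[{\tt sign}(g)]$ legitimate; it holds because a linear image of a standard Gaussian vector along two orthogonal directions yields independent Gaussians. Everything else is a routine Gaussian moment computation ($\E|g| = \sqrt{2/\pi}$, $\E g^2 = 1$), and since this lemma is quoted from \cite{qjl} I would cite that source for the details while including the short argument above for completeness.
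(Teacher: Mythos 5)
Your proof is correct and follows essentially the same route as the paper: both reduce the estimator to an average of $d$ i.i.d.\ per-row terms $z_i=\sqrt{\pi/2}\,\s_i^\top\y\cdot{\tt sign}(\s_i^\top\x)$, bound $\mathtt{Var}(z_i)$ by the second moment, and divide by $d$. The only difference is that the paper simply cites Lemma~3.2 and Fact~3.4 of \cite{qjl} for the per-row facts, whereas you supply a self-contained Gaussian argument; note also that for the second moment you could shortcut your orthogonal decomposition by observing ${\tt sign}(\cdot)^2\equiv1$, so $\E[(\s_i^\top\y\cdot{\tt sign}(\s_i^\top\x))^2]=\E[(\s_i^\top\y)^2]=\norm{\y}_2^2$ directly, which is what the paper does — the decomposition into $g$ and $h$ is genuinely needed only for the first moment.
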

\begin{proof}
The unbiasedness immediately follows from Lemma~3.2 of \cite{qjl}.
To show the variance bound let $\s_1, \s_2, \ldots \s_m$ denote the rows of the random matrix $\S$ in \cref{def_qjl}. We have:
    \[
    \left< \y, Q_{\tt qjl}^{-1}\left( Q_{\tt qjl}(\x) \right) \right> = \frac{1}{d} \sum_{i\in[d]} \sqrt{\pi/2} \cdot \s_i^\top \y \cdot {\tt sign}(\s_i^\top \x).
    \]
    Since $\s_i$'s are i.i.d. the above is indeed the average of $d$ i.i.d. random samples defined as $z_i := \sqrt{\pi/2} \cdot \s_i^\top \y \cdot {\tt sign}(\s_i^\top \x)$ for $i \in [d]$.
    Let us now upper bound the variance of a single $z_i$ using Fact~3.4 from \cite{qjl}:
    \begin{equation}\label{eq_moments_estimator}
    \mathtt{Var} \left( z_i \right) = \pi/2 \cdot \mathtt{Var} \left( \s_i^\top \y \cdot {\tt sign}(\s_i^\top \x) \right) \le \pi/2 \cdot \E \left[ (\s_i^\top \y)^2 \right] = \pi/2 \cdot \norm{y}_2^2,
    \end{equation}
    where the last equality above follows because $\s_i^\top \y$ is a Gaussian random variable with mean zero and variance $\|\y\|_2^2$.
    Now the variance of the average of $d$ i.i.d. random samples $z_1, z_2, \ldots z_d$ is:
    \[
    \mathtt{Var} \left( \left< \y, Q_{\tt qjl}^{-1}\left( Q_{\tt qjl}(\x) \right) \right> \right) = \frac{1}{d^2} \sum_{i\in[d]} \mathtt{Var} ( z_i ) \le \frac{\pi}{2 d} \cdot \norm{\y}_2^2.
    \]
\end{proof}

\section{\TurboQuant: High Performance Quantization}
We developed two VQ algorithms, each tailored to a specific objective. 
The first algorithm is designed to minimize the MSE between the original and reconstructed vectors after quantization. 
The second algorithm is optimized for unbiased inner product estimation, addressing the bias inherent in MSE-optimal quantizers. 
These algorithms are detailed in the following subsections.

Furthermore, in \cref{sec:lower_bound}, we establish information-theoretic lower bounds on the best achievable distortion rates for any vector quantizer. 
This analysis demonstrates that \TurboQuant achieve near-optimality, differing from the lower bound by only a small constant factor across all bit-widths.

\subsection{MSE Optimal \TurboQuant} \label{sec:mse_turbo_alg}
Let $\x \in \SS^{d-1}$ be a (worst-case) vector on the unit sphere in dimension $d$.
We aim to quantize $\x$ to $b$ bits per coordinate while minimizing the reconstruction MSE defined in \cref{eq:mse}.
We start by randomizing this vector by multiplying it with a random rotation matrix $\BPi \in \RR^{d \times d}$. 
We can generate $\BPi$ by applying QR decomposition on a random matrix with i.i.d Normal entries.

The resulting rotated vector, $\BPi \cdot \x$, is uniformly distributed on the unit sphere $\SS^{d-1}$.
As shown in \cref{lem_coordinate_distribution}, each coordinate of $\BPi \cdot \x$ follows a Beta distribution, which converges to a normal distribution in high dimensions.
Furthermore, in high dimensions, distinct coordinates of $\BPi \cdot \x$ become nearly independent~\cite{vershynin2018high}, allowing us to apply optimal scalar quantizers to each coordinate independently.
Therefore, by \cref{lem_coordinate_distribution}, our task reduces to designing a scalar quantizer for random variables with the distribution $f_{X}(x) = \frac{\Gamma(d/2)}{\sqrt{\pi} \cdot \Gamma((d-1)/2)} \left( 1 - x^2 \right)^{(d-3)/2}$ for $x \in [-1, 1]$.

The optimal scalar quantization problem, given a known probability distribution, can be framed as a continuous k-means problem in dimension one. 
Specifically, we aim to partition the interval $[-1, 1]$ into $2^b$ clusters/buckets.
The optimal solution adheres to a Voronoi tessellation~\cite{lloyd1982least}, meaning interval boundaries are the midpoints between consecutive centroids, when arranged in sorted order.
Therefore, with $c_i$'s denoting the centroids in ascending order, we can formulate the scalar quantization as the following k-means optimization problem:
\begin{equation}\label{eq:continuous_k_means}
    \Ccal(f_X, b) := \min_{-1 \le c_1 \le c_2 \le \ldots \le c_{2^b} \le 1} \sum_{i=1}^{2^b} \int_{\frac{c_{i-1} + c_i}{2}}^{\frac{c_i + c_{i+1}}{2}} |x - c_i|^2 \cdot f_X(x)~dx.
\end{equation}
Note that $\Ccal(f_X, b)$ in \cref{eq:continuous_k_means} denotes the optimal MSE cost function for bit-width $b$, a quantity we will bound to prove the upper bound on the end-to-end MSE of \TurboQuant.
The problem in \cref{eq:continuous_k_means} can be solved using iterative numerical methods to achieve any desired precision.
We solve \cref{eq:continuous_k_means} for a range of practically relevant bit-widths $b$ once, and store the results for future uses by the quantizer.

For example, in moderately high dimensions $d$, where the distribution $f_X(x)$ closely approximates a normal distribution, the optimal quantization centroids for bit-widths $b = 1, 2$ are $\left\{ \pm \frac{\sqrt{2/\pi}}{\sqrt{d}} \right\}$ and $\left\{ \pm \frac{0.453}{\sqrt{d}}, \pm \frac{1.51}{\sqrt{d}} \right\}$, respectively.

Therefore the quantizer $Q_{\tt mse}: \RR^d \to \{ 0, 1 \}^{b \cdot d}$ first computes $\BPi \cdot \x$ and then computes and stores the indices of the nearest centroids to each coordinate of this vector.
The dequantization map $Q_{\tt mse}^{-1}: \{ 0, 1 \}^{b \cdot d} \to \RR^d$ reconstructs the vector by retrieving the centroids corresponding to the stored indices and then rotating the result back to the original basis through multiplication with $\BPi^\top$.
A pseudocode for these procedures is given in \cref{alg_turboquant_mse}.

\begin{algorithm}[t!]
\caption{$\TurboQuant_{\tt mse}$: optimized for MSE}\label{alg_turboquant_mse}
\begin{algorithmic}[1]
    \STATE{\bfseries input:} dimension $d$ and bit-width $b$

    {\ttfamily\textcolor{blue}{// Global Parameters for Setting up $\TurboQuant_{\tt mse}$}} \\
    \STATE Generate a {\ttfamily\textcolor{blue}{random rotation matrix}} $\BPi \in \RR^{d \times d}$

    \STATE Construct {\ttfamily\textcolor{blue}{codebook}} by finding centroids $c_1, c_2, \ldots c_{2^b} \in [-1, 1]$ that minimize MSE cost in \cref{eq:continuous_k_means} \label{alg:codebook_construction}
   \\\hrulefill
   \STATE{\bf Procedure} $\textsc{Quant}_{\tt mse} ( \x )$
   \STATE $\y \gets \BPi \cdot \x$\label{line_y}
   \STATE ${\tt idx}_j \gets \arg\min_{k \in [2^b]}  \abs{\y_j - c_k}$ for every $j \in [d]$\label{line_idx} \hfill \COMMENT{{\ttfamily\textcolor{blue}{ ${\tt idx}_j$'s are $b$-bit integers}}}
    \STATE {\bf output: }${\tt idx}$
   \\\hrulefill
   \STATE{\bf Procedure} $\textsc{DeQuant}_{\tt mse} ({\tt idx})$
   \STATE $\tilde{\y}_j \gets c_{{\tt idx}_j}$ for every $j \in [d]$\label{y_tilde}
   \STATE $\tilde{\x} \gets \BPi^\top \cdot \tilde{\y}$
   \STATE {\bf output: }$\tilde{\x}$ 
\end{algorithmic}
\end{algorithm}

We are now ready to prove our main theorem for $\TurboQuant_{\tt mse}$.
\begin{theorem}[performance guarantee: $\TurboQuant_{\tt mse}$]\label{thrm_mse}
For any bit-width $b \ge 1$ and any vector $\x \in \SS^{d-1}$, the procedure $\textsc{Quant}_{\tt mse}(\x)$ in \cref{alg_turboquant_mse} outputs an index vector ${\tt idx} \in [2^b]^d$. 
When this index vector is passed to the primitive $\textsc{DeQuant}_{\tt mse}({\tt idx})$, it produces a reconstructed vector $\tilde{\x} \in \RR^d$ that satisfies the following distortion bounds:
\begin{itemize}
    \item MSE defined as $D_{\tt mse} := \E_{\tilde{\x}}[ \norm{\x - \tilde{\x}}_2^2 ]$ is bounded by $D_{\tt mse} \le \frac{\sqrt{3} \pi}{2} \cdot \frac{1}{4^{b}}$ for any $b \ge 0$.
    \item For small bit-widths, specifically $b = 1, 2, 3, 4$ the MSE exhibits finer-grained distortion values: $D_{\tt mse} \approx {\bf 0.36}, {\bf 0.117}, {\bf 0.03}, {\bf 0.009}$, respectively.
\end{itemize}
\end{theorem}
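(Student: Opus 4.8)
The plan is to split the reconstruction error into a part along the direction of $\x$ and a part orthogonal to it, and then bound each piece in terms of the continuous $k$-means cost $\Ccal(f_X, b)$ from \cref{eq:continuous_k_means}, which in turn I would bound by an explicit quantity. Since $\BPi$ is a random rotation and dequantization applies $\BPi^\top$, by rotational invariance the MSE $\E[\norm{\x - \tilde\x}_2^2] = \E[\norm{\y - \tilde\y}_2^2]$ where $\y = \BPi\x \in \SS^{d-1}$ is uniform on the sphere. Now $\norm{\y - \tilde\y}_2^2 = \sum_{j=1}^d (\y_j - \tilde\y_j)^2$, where $\tilde\y_j = c_{{\tt idx}_j}$ is the nearest centroid to $\y_j$. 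By \cref{lem_coordinate_distribution} each $\y_j$ has marginal density $f_X$, so $\E[(\y_j - \tilde\y_j)^2] = \Ccal(f_X, b)$ exactly (the nearest-centroid rule realizes the Voronoi tessellation achieving the infimum in \cref{eq:continuous_k_means}). Hence $D_{\tt mse} = d \cdot \Ccal(f_X, b)$, and everything reduces to bounding $\Ccal(f_X, b)$.

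The main obstacle is therefore obtaining a clean closed-form upper bound on $d \cdot \Ccal(f_X,b)$ that holds for all $d$ and all $b \ge 0$, not just asymptotically. I would proceed by a change of variables $u = \sqrt{d}\, x$, under which $f_X$ converges to the standard-normal-like density $\propto e^{-u^2/2}$ but with finite-$d$ corrections; the cleanest route is probably to bound $\Ccal(f_X,b)$ by the optimal $2^b$-point quantization cost for the worst-case distribution in the relevant family. Here I would invoke a high-resolution / Panter–Dite-type bound: for a distribution supported on (essentially) an interval, the optimal $K$-point mean-squared quantization error is at most $\frac{1}{12K^2}\big(\int f^{1/3}\big)^3$, and for $b$ small one gets even better numbers by directly running Lloyd–Max. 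The constant $\frac{\sqrt3\pi}{2}$ strongly suggests the bound comes from $\big(\int_{-\infty}^\infty (\tfrac{1}{\sqrt{2\pi/d}}e^{-du^2/2})^{1/3}\,du\big)^3 = (2\pi/d)\cdot 3^{3/2}/... $ type computation with the $e^{-u^2/2}$-density; more precisely $\big(\int \phi_\sigma^{1/3}\big)^3 = \sigma^2\cdot \sqrt{3}\cdot(2\pi)\cdot(1/\text{something})$, yielding $d\cdot\Ccal \le \frac{1}{12}\cdot\frac{1}{4^{b}}\cdot(\text{that integral}) = \frac{\sqrt3\pi}{2}\cdot 4^{-b}$ after the $4^b = (2^b)^2$ from $1/(12 K^2)$ absorbs into the stated form. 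I would need to verify that the finite-$d$ Beta density's $\int f_X^{1/3}$ is dominated by its Gaussian limit, which should follow from log-concavity or a direct monotonicity argument in $d$.

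For the refined small-$b$ values, I would simply cite the result of solving \cref{eq:continuous_k_means} numerically via the Max-Lloyd fixed-point iteration for $b = 1,2,3,4$: for $b=1$ the two centroids are $\pm\sqrt{2/\pi}/\sqrt d$ and the cost $d\cdot\Ccal$ evaluates to $1 - 2/\pi \approx 0.36$; for $b=2,3,4$ the stated $0.117, 0.03, 0.009$ come from the converged centroid sets (e.g.\ $\{\pm 0.453, \pm 1.51\}/\sqrt d$ for $b=2$), and one checks these beat the closed-form $\frac{\sqrt3\pi}{2}4^{-b}$. The last bookkeeping step is to confirm the $b \ge 0$ (vs. $b \ge 1$) edge cases and the $b=0$ trivial bound $D_{\tt mse} \le 1 \le \frac{\sqrt3\pi}{2}$ hold, so the stated theorem is fully covered. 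Throughout, the near-independence of coordinates cited after \cref{lem_coordinate_distribution} is \emph{not} actually needed for the upper bound — the per-coordinate expectation computation is exact by linearity — so the proof is clean; independence only matters for tightness/lower-bound intuition.
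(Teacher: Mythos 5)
Your proof follows essentially the same route as the paper: reduce $D_{\tt mse}$ to $d\cdot\Ccal(f_X,b)$ by rotational invariance and linearity of expectation over coordinates, bound $\Ccal(f_X,b)$ via the Panter--Dite high-resolution formula $\frac{1}{12K^2}\bigl(\int f^{1/3}\bigr)^3$ applied to the (asymptotically Gaussian) coordinate density, and obtain the small-$b$ constants by numerically solving the Lloyd--Max fixed point. Your remark that the near-independence of coordinates is not actually needed for the upper bound (only linearity of expectation over the identically distributed marginals) is a correct and clarifying observation, but it is implicit in the paper's own coordinate-wise sum, so the argument is the same.
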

\begin{proof}
We start the proof by showing that $D_{\tt mse} = d \cdot \Ccal(f_X, b)$, where $\Ccal(f_X, b)$ is the optimal MSE cost for scalar quantizer defined in \cref{eq:continuous_k_means}.
Let $\tilde{\y}$ be defined as per line~\ref{y_tilde} of \cref{alg_turboquant_mse}.
Since $\BPi$ is a rotation matrix we can write:
$\norm{\x - \tilde{\x}}_2 = \norm{\BPi \cdot \x - \tilde{\y}}_2$.
Using the notation $\y = \BPi \cdot \x$ as per line~\ref{line_y} of \cref{alg_turboquant_mse} and plugging this into the definition of $D_{\tt mse}$ we can write:
\begin{align*}
    D_{\tt mse} &= \E [\norm{\y - \tilde{\y}}_2^2] \\
    &= \sum_{j \in [d]} \E\left[ |\y_j - \tilde{\y}_j|^2 \right] \\
    &= \sum_{j \in [d]} \E\left[ |\y_j - c_{{\tt idx}_j}|^2 \right]\\
    &= d \cdot \E\left[ |\y_1 - c_{{\tt idx}_1}|^2 \right] \\
    &= d \cdot \min_{-1 \le c_1 \le c_2 \le \ldots \le c_{2^b} \le 1} \sum_{i=1}^{2^b} \int_{\frac{c_{i-1} + c_i}{2}}^{\frac{c_i + c_{i+1}}{2}} |x - c_i|^2 \cdot f_X(x)~dx \\
    &= d \cdot \Ccal(f_X, b).
\end{align*}
The third equality above follows from the definition of $\tilde{\y}$ in line~\ref{y_tilde} of \cref{alg_turboquant_mse} and the fourth line above follows because all $\y_j$'s have identical distribution of $\y_j \sim f_X(\cdot)$ as shown in \cref{lem_coordinate_distribution}.
The last two lines above follows because $c_{{\tt idx}_j}$ is chosen to be the nearest centroid to each coordinate $\y_j$ in line~\ref{line_idx}.

Now we must bound the optimal k-means cost $\Ccal(f_X, b)$. 
For moderate values of $d$, $f_X \to \Ncal(0, 1/d)$. 
By numerically solving the optimization problem in \cref{eq:continuous_k_means} for values $b = 1, 2, 3, 4$ we get that $\Ccal(f_X, b) \approx \frac{0.36}{d}, \frac{0.117}{d}, \frac{0.03}{d}, \frac{0.009}{d}$, respectively.
For larger bit-widths $b > 4$, we can apply the Panter-Dite~\cite{panter1951quantization} high-resolution formula for the distortion of a fixed-rate scalar quantizer, yielding the following bound:
\[
\Ccal(f_X, b) \le \frac{1}{12} \cdot \left( \int f_X(x)^{1/3}~dx \right)^3 \cdot \frac{1}{4^b} =  \frac{\sqrt{3} \pi }{2 d} \cdot \frac{1}{4^b}.
\]
This completes the proof.
\end{proof}

\paragraph{Entropy Encoding Codebook Pointers.}
\TurboQuant's efficiency can be further increased by applying entropy encoding to the indices that point to the closest codebook elements. 
Specifically, the probability of each codeword index appearing in the quantized vectors can be computed as $p_\ell := \int_{\frac{c_{\ell-1} + c_\ell}{2}}^{\frac{c_\ell + c_{\ell+1}}{2}} f_X(x)~dx$.
Optimally coding the indices, reduces the average bit-width to nearly the entropy of the distribution $\{ p_i \}_{i \in [2^b]}$.
This lossless compression does not affect the distortion and provides a bit-width reduction at no cost.
The most significant reduction occurs for $b=4$, where the entropy of $\{ p_i \}_{i \in [2^b]}$ is approximately $3.8$.
Detailed calculations for optimal prefix codes reveal that the average bit-width can be reduced by $5 \%$.
However, given the limited gain, we have chosen not to incorporate this technique into \TurboQuant to maintain simplicity and speed.

\subsection{Inner-product Optimal \TurboQuant} \label{sec:prod_turbo_alg}
For important applications like nearest neighbor search, having an unbiased inner product estimator is essential.
However, $\TurboQuant_{\tt mse}$ presented in \cref{sec:mse_turbo_alg} does not provide unbiased inner product estimates with query vectors.
To illustrate this, consider the case with a bit-width of $b=1$.
In this scenario, the optimal codebooks that solve the optimization problem in \cref{eq:continuous_k_means}, for sufficiently large $d$, are $\left\{ \pm \sqrt{\frac{2}{\pi d}} \right\}$. 
This implies that the quantization map for $\TurboQuant_{\tt mse}$ is $Q_{\tt mse}(\x) = {\tt sign} \left( \BPi \cdot \x \right)$ for any $\x \in \RR^d$, and the dequantization map is $Q_{\tt mse}^{-1}(\z) = \sqrt{\frac{2}{\pi d}} \cdot \BPi^\top \cdot \z$ for any $\z \in \{ -1, +1\}^d$. 
Therefore, for large enough $d$, according to \cref{lem_qjl}, we have $\E\left[ \left< \y, Q_{\tt mse}^{-1}\left( Q_{\tt mse}(\x) \right) \right> \right] = \frac{2}{\pi} \cdot \langle \y, \x \rangle$, which has a multiplicative bias of $2/\pi$.
This bias diminishes with increasing bit-widths $b$, as we empirically demonstrate in \cref{sec:exp_valivation}.

To address this bias, we propose a solution that combines $\TurboQuant_{\tt mse}$ with an instance of QJL~\cite{qjl}.
Specifically, let $Q_{\tt mse}$ be the quantization map corresponding to $\TurboQuant_{\tt mse}$ with a bit-width of $b-1$. 
For any $\x \in \SS^{d-1}$ the residual vector, defined as $\r := \x - Q_{\tt mse}^{-1}\left( Q_{\tt mse}(\x) \right)$, has a small L2 norm, i.e., on expectation $\E[\norm{\r}] = \sqrt{\Ccal(f_X, b-1)}$ (per \cref{eq:continuous_k_means}).
We can then apply the QJL quantization map $Q_{\tt qjl}$ on this residual vector, resulting in an overall bit-width of $b$ and providing the following unbiased inner product estimator:
\[
\left< \y, Q_{\tt mse}^{-1}\left( Q_{\tt mse}(\x) \right) \right> + \norm{\r}_2 \cdot \left< \y, Q_{\tt qjl}^{-1}\left( Q_{\tt qjl}(\r) \right) \right>.
\]
More formally, the quantization map $Q_{\tt prod}: \SS^{d-1} \to [2^{b-1}]^d \times \{ -1, 1 \}^d \times \RR$ is defined as:
\[
Q_{\tt prod}(\x) = \left[ Q_{\tt mse}(\x), Q_{\tt qjl}\left( \x - Q_{\tt mse}^{-1}\left( Q_{\tt mse}(\x) \right) \right), \norm{\x - Q_{\tt mse}^{-1}\left( Q_{\tt mse}(\x) \right)}_2 \right].
\]
A pseudocode for this procedure is given in \cref{alg_turboquant_prod}.

\begin{algorithm}[t!]
\caption{$\TurboQuant_{\tt prod}$: optimized for inner product}\label{alg_turboquant_prod}
\begin{algorithmic}[1]
    \STATE{\bfseries input:} dimension $d$ and bit-width $b$

    {\ttfamily\textcolor{blue}{// Global Parameters for Setting up $\TurboQuant_{\tt prod}$}} \\
    \STATE Instantiate a {\ttfamily\textcolor{blue}{$\TurboQuant_{\tt mse}$}} with bit-width $b-1$ as per \cref{alg_turboquant_mse}\label{turbo_mse}
    \STATE Generate a {\ttfamily\textcolor{blue}{random projection matrix}} $\S \in \RR^{d \times d}$ with i.i.d. entries $\S_{i,j} \sim \Ncal(0, 1)$\label{random_s}
   \\\hrulefill
   \STATE{\bf Procedure} $\textsc{Quant}_{\tt prod} ( \x )$
   \STATE ${\tt idx} \gets \textsc{Quant}_{\tt mse}(\x)$
   \STATE $\r \gets \x - \textsc{DeQuant}_{\tt mse}({\tt idx})$ \label{line_residual} \hfill \COMMENT{{\ttfamily\textcolor{blue}{ residual vector}}} 
   \STATE ${\tt qjl} \gets {\tt sign} \left( \S \cdot \r \right)$ \hfill \COMMENT{{\ttfamily\textcolor{blue}{ QJL on residual vector}}}
   \STATE {\bf output: }$({\tt idx}, {\tt qjl}, \norm{\r}_2)$
   \\\hrulefill
   \STATE{\bf Procedure} $\textsc{DeQuant}_{{\tt prod}} ( {\tt idx}, {\tt qjl}, \gamma )$
   \STATE $\tilde{\x}_{\tt mse} \gets \textsc{DeQuant}_{\tt mse}({\tt idx})$ \label{line_tilde_x}
   \STATE $\tilde{\x}_{\tt qjl} \gets \frac{\sqrt{\pi/2}}{d} \cdot \gamma \cdot \S^\top \cdot {\tt qjl}$ \label{line_tilde_x_qjl}
   \STATE {\bf output: }$\tilde{\x}_{\tt mse} + \tilde{\x}_{\tt qjl}$ \label{output_tilde_x}
\end{algorithmic}
\end{algorithm}

We prove the main result for $\TurboQuant_{\tt prod}$ in the following theorem.
\begin{theorem}[performance guarantee: $\TurboQuant_{\tt prod}$] \label{thrm_prod}
For any bit-width $b \ge 1$ and any vector $\x \in \SS^{d-1}$, the procedure $\textsc{Quant}_{\tt prod}(\x)$ in \cref{alg_turboquant_prod} outputs an index vector ${\tt idx} \in [2^{b-1}]^d$ along with a sign vector ${\tt qjl} \in \{ -1, 1 \}^d$ and a positive number $\gamma \ge 0$. 
When these vectors and the scalar value are passed to the primitive $\textsc{DeQuant}_{\tt prod}({\tt idx}, {\tt qjl}, \gamma)$, it produces a reconstructed vector $\tilde{\x} \in \RR^d$ that for any vector $\y \in \RR^d$ satisfies the following properties:
\begin{itemize}
    \item Expected inner-product $\E_{\tilde{\x}}\left[ \left< \y, \tilde{\x} \right> \right] = \langle \y, \x \rangle$
    \item Inner-product distortion defined as $D_{\tt prod} := \E_{\tilde{\x}}\left[\left| \langle \y, \x \rangle - \langle \y, \tilde{\x} \rangle \right|^2 \right]$ is bounded by $D_{\tt prod} \le \frac{\sqrt{3} \pi^2 \cdot \norm{\y}_2^2}{d} \cdot \frac{1}{4^{b}}$ for any $b \ge 0$.
    \item For small bit-widths, specifically $b = 1, 2, 3, 4$, $D_{\tt prod}$ exhibits finer-grained distortion values: $D_{\tt prod} \approx \frac{\bf 1.57}{d}, \frac{\bf 0.56}{d}, \frac{\bf 0.18}{d}, \frac{\bf 0.047}{d}$, respectively.
\end{itemize}
\end{theorem}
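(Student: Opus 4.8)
The plan is to decompose the reconstruction into its two stages and analyze each separately, leveraging the tower property of expectation over the two independent sources of randomness (the rotation $\BPi$ inside $Q_{\tt mse}$, and the Gaussian matrix $\S$ inside QJL). Write $\tilde{\x}_{\tt mse} = Q_{\tt mse}^{-1}(Q_{\tt mse}(\x))$ and $\r = \x - \tilde{\x}_{\tt mse}$, so that the output is $\tilde{\x} = \tilde{\x}_{\tt mse} + \tilde{\x}_{\tt qjl}$ with $\tilde{\x}_{\tt qjl} = \norm{\r}_2 \cdot Q_{\tt qjl}^{-1}(Q_{\tt qjl}(\r))$. First I would establish \textbf{unbiasedness}: conditioning on $\BPi$ (hence on $\r$ and $\norm{\r}_2$), \cref{lem_qjl} gives $\E_{\S}[\langle \y, Q_{\tt qjl}^{-1}(Q_{\tt qjl}(\r))\rangle] = \langle \y, \r/\norm{\r}_2\rangle$ since $\r/\norm{\r}_2 \in \SS^{d-1}$; multiplying by $\norm{\r}_2$ yields $\E_\S[\langle \y, \tilde{\x}_{\tt qjl}\rangle] = \langle \y, \r\rangle$. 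Adding $\langle \y, \tilde{\x}_{\tt mse}\rangle$ and taking expectation over $\BPi$ gives $\E[\langle \y, \tilde{\x}\rangle] = \E[\langle \y, \tilde{\x}_{\tt mse} + \r\rangle] = \langle \y, \x\rangle$, since $\tilde{\x}_{\tt mse} + \r = \x$ deterministically.

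For the \textbf{distortion bound}, note that because the estimator is unbiased, $D_{\tt prod} = \Var(\langle \y, \tilde{\x}\rangle)$. Decompose $\langle \y, \tilde{\x}\rangle - \langle \y, \x\rangle = \big(\langle \y, \tilde{\x}_{\tt qjl}\rangle - \langle \y, \r\rangle\big)$, since the $\tilde{\x}_{\tt mse}$ part is cancelled exactly by $\langle\y,\x\rangle$'s decomposition. So $D_{\tt prod} = \E\big[\,|\langle \y, \tilde{\x}_{\tt qjl}\rangle - \langle \y, \r\rangle|^2\,\big]$. Conditioning on $\BPi$, the inner expectation over $\S$ is exactly the variance of the QJL estimator applied to the unit vector $\r/\norm{\r}_2$, scaled by $\norm{\r}_2^2$; by the variance bound of \cref{lem_qjl} this is at most $\norm{\r}_2^2 \cdot \frac{\pi}{2d}\norm{\y}_2^2$. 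Taking the outer expectation over $\BPi$, we get $D_{\tt prod} \le \frac{\pi \norm{\y}_2^2}{2d} \cdot \E_{\BPi}[\norm{\r}_2^2]$. By the identity from the proof of \cref{thrm_mse}, $\E_{\BPi}[\norm{\r}_2^2] = d \cdot \Ccal(f_X, b-1)$, the optimal scalar-quantizer cost at bit-width $b-1$. Substituting the Panter–Dite bound $\Ccal(f_X, b-1) \le \frac{\sqrt 3 \pi}{2d}\cdot\frac{1}{4^{b-1}} = \frac{2\sqrt3\pi}{d}\cdot\frac{1}{4^b}$ gives $D_{\tt prod} \le \frac{\pi\norm{\y}_2^2}{2d}\cdot 2\sqrt3\pi \cdot \frac{1}{4^b} = \frac{\sqrt3\pi^2\norm{\y}_2^2}{d}\cdot\frac{1}{4^b}$, as claimed. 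For the refined small-$b$ values, substitute the numerically computed $\Ccal(f_X, b-1) \approx 0.36/d, 0.117/d, 0.03/d$ for $b=2,3,4$ (and for $b=1$, $\r = \x$ so $\E[\norm{\r}_2^2]=1$), each time multiplying by $\pi/2$: e.g. $b=1$ gives $\pi/(2d) \approx 1.57/d$, $b=2$ gives $0.36\pi/(2d)\approx 0.56/d$, etc.

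The main obstacle is the \textbf{rigorous treatment of the near-independence of coordinates} used implicitly in \cref{thrm_mse}: the identity $\E_{\BPi}[\norm{\r}_2^2] = d\cdot\Ccal(f_X,b-1)$ holds exactly (each coordinate of $\BPi\x$ is marginally $f_X$-distributed, and expectation is linear, so no independence is needed for \emph{this} step — this is the clean part), but one should double-check that the $\norm{\r}_2$ stored and used as the scaling factor is exactly $\norm{\x - \tilde{\x}_{\tt mse}}_2$ and that $\r \ne \0$ almost surely so that $\r/\norm{\r}_2$ is well-defined when invoking \cref{lem_qjl}. A second subtlety is that \cref{lem_qjl} as stated requires the first argument to lie on $\SS^{d-1}$; applying it to $\r/\norm{\r}_2$ conditionally on $\BPi$ is legitimate precisely because $\S$ is independent of $\BPi$, so I would state this conditioning explicitly. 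Beyond these points the argument is a routine two-level variance computation.
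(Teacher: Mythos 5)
Your proposal is correct and follows essentially the same two-stage conditioning argument as the paper: condition on the first-stage output to apply \cref{lem_qjl}'s unbiasedness and variance bound to the (rescaled) residual, then take the outer expectation and plug in the MSE bounds from \cref{thrm_mse} at bit-width $b-1$. The extra care you take about normalizing $\r$ before invoking \cref{lem_qjl} and about independence of $\S$ from $\BPi$ is a small refinement the paper leaves implicit but is not a different route.
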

\begin{proof}
First we compute the conditional expectation of the inner product estimate $\langle \y, \tilde{\x} \rangle$ conditioned on $\tilde{\x}_{\tt mse}$ as follows:
\begin{align*}
    \E \left[ \langle \y, \tilde{\x} \rangle | \tilde{\x}_{\tt mse} \right] &= \E_{\tilde{\x}_{\tt qjl}} \left[ \langle \y, \tilde{\x}_{\tt mse} + \tilde{\x}_{\tt qjl} \rangle | \tilde{\x}_{\tt mse} \right]\\
    &= \langle \y, \tilde{\x}_{\tt mse} \rangle + \E_{\tilde{\x}_{\tt qjl}} \left[ \langle \y, \tilde{\x}_{\tt qjl} \rangle | \tilde{\x}_{\tt mse} \right] \\
    &= \langle \y, \tilde{\x}_{\tt mse} \rangle + \langle \y , \r \rangle\\
    &= \langle \y , \x \rangle,
\end{align*}
where the first equality follows from the definition of $\tilde{\x}$ in line~\ref{output_tilde_x} of the algorithm.
The third equality above follows from \cref{lem_qjl} and last line follows from definition of the residual vector $\r = \x - \tilde{\x}_{\tt mse}$ in line~\ref{line_residual}.
Now we can computed the unconditional expectation using the law of total expectation:
$\E_{\tilde{\x}}\left[ \left< \y, \tilde{\x} \right> \right] = \E_{\tilde{\x}_{\tt mse}} \left[\E \left[ \langle \y, \tilde{\x} \rangle | \tilde{\x}_{\tt mse} \right] \right] = \E[\langle \y , \x \rangle] = \langle \y , \x \rangle$, which proves the first claim of the theorem.

We apply the same conditioning on $\tilde{\x}_{\tt mse}$, when computing the distortion, and then compute the resulting conditional distortion:
\begin{align*}
    \E\left[\left. \left| \langle \y, \x \rangle - \langle \y, \tilde{\x} \rangle \right|^2 \right| \tilde{\x}_{\tt mse} \right] &= \E_{\tilde{\x}_{\tt qjl}}\left[\left. \left| \langle \y, \x \rangle - \langle \y, \tilde{\x}_{\tt mse} + \tilde{\x}_{\tt qjl} \rangle \right|^2 \right| \tilde{\x}_{\tt mse} \right] \\
    &= \E_{\tilde{\x}_{\tt qjl}}\left[\left. \left| \langle \y, \r \rangle - \langle \y, \tilde{\x}_{\tt qjl} \rangle \right|^2 \right| \tilde{\x}_{\tt mse} \right] \\
    &= \mathtt{Var} \left( \left. \langle \y, \tilde{\x}_{\tt qjl} \rangle \right| \tilde{\x}_{\tt mse} \right) \\
    &\le \frac{\pi}{2d} \cdot \norm{\r}_2^2 \norm{\y}_2^2,
\end{align*}
where the second equality above follows from the definitions of $\r$ and $\tilde{\x}_{\tt mse}$ in lines~\ref{line_residual} and \ref{line_tilde_x} of \cref{alg_turboquant_prod}.
The third line above follows because $\E[\langle \y, \tilde{\x}_{\tt qjl} \rangle] = \langle \y, \r \rangle$, by \cref{lem_qjl}. 
The last line follows from the variance bound of QJL estimator shown in \cref{lem_qjl} and using the fact that $\tilde{\x}_{\tt qjl}$ in line~\ref{line_tilde_x_qjl} is re-scaled by $\gamma = \norm{\r}$.

Now by law of total expectation along with the fact that $\r = \x - \tilde{\x}_{\tt mse}$ we can bound the inner product distortion as follows:
\begin{align*}
    D_{\tt prod} &= \E_{\tilde{\x}_{\tt mse}} \left[ \E\left[\left. \left| \langle \y, \x \rangle - \langle \y, \tilde{\x} \rangle \right|^2 \right| \tilde{\x}_{\tt mse} \right] \right]\\
    &\le \frac{\pi}{2 d} \cdot \norm{\y}_2^2 \cdot \E[\norm{\x - \tilde{\x}_{\tt mse}}_2^2]\\
    &= \frac{\pi}{2 d} \cdot \norm{\y}_2^2 \cdot D_{\tt mse}.
\end{align*}
The theorem follows by invoking the MSE bounds from \cref{thrm_mse} with bit-width $b-1$.
\end{proof}

\subsection{Lower Bounds}
\label{sec:lower_bound}

We show that \TurboQuant achieves an optimal distortion rate, up to a small constant factor, for any bit-width by proving lower bounds on the best achievable distortion for any compression algorithm. 
Our lower bound proof leverages Yao's minimax principle. 
This principle allows us to relate the lower bound for randomized algorithms with worst-case deterministic input vectors to the lower bound for deterministic algorithms with randomized input vectors. 
Subsequently, we derive a lower bound on the achievable distortion rate for the latter using Shannon's lower bound (SLB) presented in \cref{sec:slb}.
Formally, we prove the following theorem.

\begin{theorem}[lower bound on best achievable compression distortion]
\label{thrm_lower_bound}
For any randomized quantization algorithm $Q: \SS^{d-1} \to \{ 0, 1 \}^{b \cdot d}$ with bit-width $b$ and any reconstruction map $Q^{-1}: \{ 0, 1 \}^{b \cdot d} \to \RR^d$, there exist a hard input instance $\x \in \SS^{d-1}$ such that:
\[
D_{\tt mse}(Q) := \E\left[\left\| \x - Q^{-1}\left( Q(\x) \right) \right\|_2^2 \right] \ge \frac{1}{4^{b}}.
\]
Furthermore, there exists a $\y \in \SS^{d-1}$ such that:
\[
D_{\tt prod}(Q) = \E\left[\left| \langle \y, \x \rangle - \langle \y, Q^{-1}\left( Q(\x) \right) \rangle \right|^2 \right] \ge \frac{1}{d} \cdot \frac{1}{4^{b}}
\]
\end{theorem}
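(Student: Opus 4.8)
The plan is to use Yao's minimax principle to convert the problem of lower-bounding the distortion of a randomized quantizer on a worst-case input into the problem of lower-bounding the expected distortion of a \emph{deterministic} quantizer on a \emph{randomly chosen} input. The natural hard distribution is the uniform distribution on $\SS^{d-1}$, since it is the maximum-entropy distribution supported on the sphere and our codebook budget is $B = b \cdot d$ bits. Concretely, for any fixed deterministic map $(Q, Q^{-1})$ with $Q: \SS^{d-1} \to \{0,1\}^{b d}$, the image $Q^{-1}(Q(\cdot))$ takes at most $2^{bd}$ distinct values, so $Q(\cdot)$ induces a (deterministic) encoding of $\x$ into at most $bd$ bits; hence $I(\x; Q(\x)) \le bd$. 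Applying Lemma~\ref{lem_slb} (more precisely its corollary Lemma~\ref{lem_slb_random_sphere}) with $B = bd$ gives $\E_{\x \sim \mathrm{Unif}(\SS^{d-1})}\!\left[\norm{\x - Q^{-1}(Q(\x))}_2^2\right] \ge 2^{-2bd/d} = 2^{-2b} = 4^{-b}$. By Yao's principle, this average-case lower bound for deterministic algorithms implies that every randomized $Q$ must have $D_{\tt mse}(Q) \ge 4^{-b}$ on some worst-case $\x \in \SS^{d-1}$, proving the first claim.

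For the inner-product lower bound, the idea is to reduce to the MSE bound by choosing $\y$ to be a uniformly random unit vector independent of everything else, and averaging over it. Writing $\r := \x - Q^{-1}(Q(\x))$ for the (random) reconstruction error, we have $\langle \y, \x\rangle - \langle \y, Q^{-1}(Q(\x))\rangle = \langle \y, \r\rangle$, and for $\y$ uniform on $\SS^{d-1}$ and independent of $\r$, $\E_{\y}\!\left[\langle \y, \r\rangle^2\right] = \tfrac{1}{d}\,\norm{\r}_2^2$. Taking expectations over the quantizer randomness and the hard input as well, $\E\!\left[|\langle\y,\x\rangle - \langle\y, Q^{-1}(Q(\x))\rangle|^2\right] = \tfrac{1}{d}\,\E[\norm{\r}_2^2] \ge \tfrac{1}{d} \cdot 4^{-b}$ by the MSE bound just established. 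Since this holds in expectation over $\y$, there must exist a specific $\y \in \SS^{d-1}$ achieving at least this value; rescaling $\y$ if one wants a general $\norm{\y}_2$ introduces the stated $\norm{\y}_2^2$ factor. An alternative, cleaner route is to fix a single coordinate direction $\y = \e_j$ for a random $j \in [d]$ after a random rotation has been applied, which by symmetry gives the same $\tfrac{1}{d}\norm{\r}_2^2$ identity on average over $j$.

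I would carry out the steps in this order: (1) state Yao's minimax principle in the form needed (randomized algorithm, worst-case input $\ge$ deterministic algorithm, distributional input); (2) fix the uniform-on-sphere input distribution and bound the mutual information $I(\x; Q(\x)) \le bd$ for any deterministic $Q$; (3) invoke Lemma~\ref{lem_slb_random_sphere} to get the $4^{-b}$ bound on average MSE; (4) apply Yao to conclude the worst-case MSE lower bound; (5) for the inner product, introduce the random query direction, use the identity $\E_\y[\langle\y,\r\rangle^2] = \tfrac1d\norm{\r}_2^2$, and combine with step (3)/(4) via the law of total expectation; (6) conclude existence of a hard $\y$.

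The main obstacle I anticipate is step (2)/(4) — making the information-theoretic argument fully rigorous under Yao's principle. The subtlety is that Shannon's lower bound (Lemma~\ref{lem_slb}) bounds the distortion-rate function $D(p_X, B)$, which is an infimum over \emph{all} joint distributions (equivalently all stochastic channels) with mutual information at most $B$; one must verify that a deterministic quantizer–dequantizer pair with a $bd$-bit bottleneck indeed realizes a joint distribution of $(\x, \y)$ with $I(\x;\y) \le bd$, so that its MSE is no smaller than $D(p_X, bd)$. This is true because the data-processing inequality gives $I(\x; Q^{-1}(Q(\x))) \le I(\x; Q(\x)) \le H(Q(\x)) \le bd$, but stating it carefully (and making sure the differential-entropy hypotheses of Lemma~\ref{lem_slb} are met for the uniform-on-sphere source, which has finite differential entropy $\log_2 A_d$) is the part that needs attention. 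Everything else is a short computation.
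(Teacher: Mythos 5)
Your proposal is correct and follows essentially the same route as the paper: Yao's minimax principle to pass from a worst-case input for a randomized quantizer to an average-case bound for a deterministic one, the uniform-on-sphere source, the Shannon lower bound from Lemma~\ref{lem_slb_random_sphere}, and then an averaging/pigeonhole step over query directions to obtain the inner-product bound. The only cosmetic difference is that the paper averages over the $d$ coordinate directions $\e_j$ and invokes pigeonhole, whereas you average $\langle\y,\r\rangle^2$ over a uniformly random $\y$ on $\SS^{d-1}$ — which, as you note yourself, is equivalent; and your explicit verification that $I(\x;Q^{-1}(Q(\x)))\le H(Q(\x))\le bd$ via data processing is a useful bit of rigor that the paper leaves implicit.
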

\begin{proof}
By Yao's minimax principle the expected MSE of the optimal randomized compression algorithm for worst-case inputs ($D_{\tt mse}$) is equal to the expected MSE of the optimal deterministic compression algorithm when applied to inputs drawn from a maximally difficult randomized distribution. 
By definition, the MSE of the latter scenario is lower-bounded by the best achievable MSE for inputs uniformly distributed on the unit hypersphere.

The best achievable MSE for a compression algorithm with bit-width $b$, operating on uniformly distributed inputs from the sphere $\SS^{d-1}$, is lower bounded in \cref{lem_slb_random_sphere}.
Therefore, by invoking \cref{lem_slb_random_sphere} we conclude that $D_{\tt mse} \ge \frac{1}{4^b}$.

Furthermore, from $D_{\tt mse} \ge \frac{1}{4^b}$ and using the definition of $D_{\tt mse}$ we conclude that:
\begin{align*}
    D_{\tt mse} &= \sum_{j=1}^d \E \left[\left| \x_j - \left[ Q^{-1}\left( Q(\x) \right) \right]_j \right|^2 \right]\\
    &= \sum_{j=1}^d \E \left[\left| \langle \e_j, \x \rangle - \langle \e_j, Q^{-1}\left( Q(\x) \right) \rangle \right|^2 \right]\\
    &\ge \frac{1}{4^b}.
\end{align*}
By pigeonhole principle there exist an index $j \in [d]$ such that $\E \left[\left| \langle \e_j, \x \rangle - \langle \e_j, Q^{-1}\left( Q(\x) \right) \rangle \right|^2 \right] \ge \frac{1}{d} \cdot \frac{1}{4^b}$, which completes the proof.
\end{proof}

We note that a comparable lower bound for the \emph{worst-case} distortion in vector quantization can be derived using ``sphere packing'' arguments (indeed, with larger constants as this is a harder problem)~\cite{gersho1982structure}.
However, \cref{thrm_lower_bound} offers a more robust and relevant lower bound for our analysis. 
This is because it establishes a lower bound on the \emph{expected distortion}, rather than the worst-case error, and aligns seamlessly with our upper bounds presented in \cref{thrm_mse} and \cref{thrm_prod}.

\section{Experiments}\label{sec:exp}
All experiments are performed using a single NVIDIA A100 GPU. 
The experimental section is divided into two parts: one to empirically validate the theoretical results, and another to evaluate the performance of our methods on downstream tasks, specifically KV cache quantization and nearest neighbor vector search.

\subsection{Empirical Validation}
\label{sec:exp_valivation}

\begin{figure}[th]
\begin{center}
    \begin{subfigure}{\textwidth}
        \caption{$\TurboQuant_\text{\tt prod}$}
        \centering
        \includegraphics[width=\textwidth]{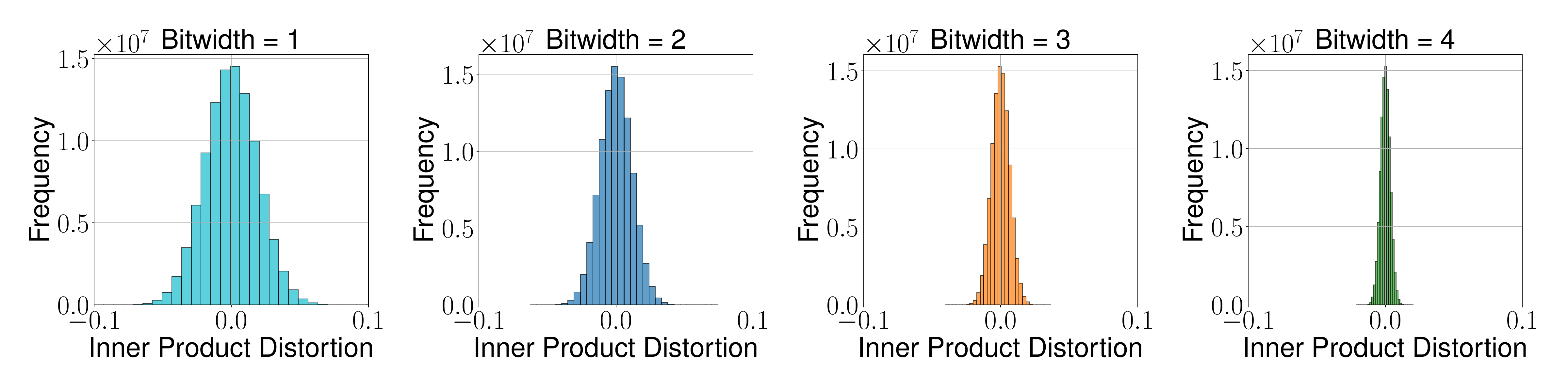}
    \end{subfigure}
    \begin{subfigure}{\textwidth}
        \caption{$\TurboQuant_\text{\tt mse}$}
        \centering
        \includegraphics[width=\textwidth]{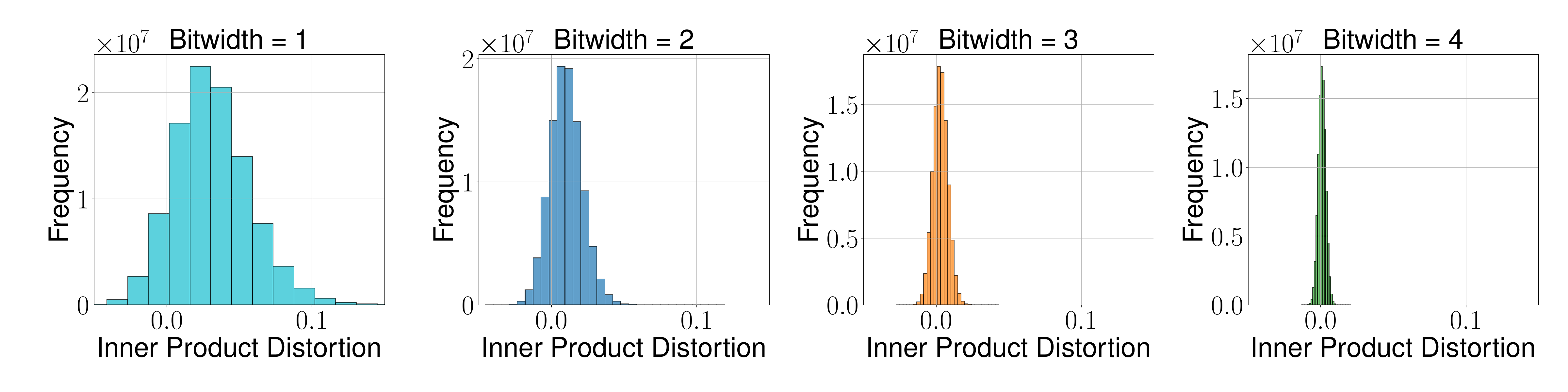}
    \end{subfigure}
    \caption{Error distribution of $\TurboQuant_\text{\tt prod}$ and $\TurboQuant_\text{\tt mse}$ for Inner Product Estimation.}
    \label{fig:inner_distortion}
\end{center}
\end{figure}

In this section, we verify the theoretical results established in previous sections. We conduct our experiments using the DBpedia Entities dataset, which has been encoded into a 1536-dimensional space using OpenAI3 embeddings. 
To perform our experiments, we randomly sample 100,000 data points from the dataset, denoted as training set, which serves as our primary dataset. Additionally, we extract 1,000 distinct entries, denoted as query set, to be used as query points.

We evaluate two quantization methods: \(\TurboQuant_\text{\tt prod}\) and \(\TurboQuant_\text{\tt mse}\). The method \(\TurboQuant_\text{\tt mse}\) is designed to be optimzed for estimating the mean squared error (MSE) between the quantized and original vectors. In contrast, \(\TurboQuant_\text{\tt prod}\) is unbiased for estimating the inner product between the quantized and original vectors.

\begin{figure}[h]
\begin{center}
    \begin{subfigure}{\textwidth}
        \caption{$\TurboQuant_{\tt prod}$}
        \centering
        \includegraphics[width=\textwidth]{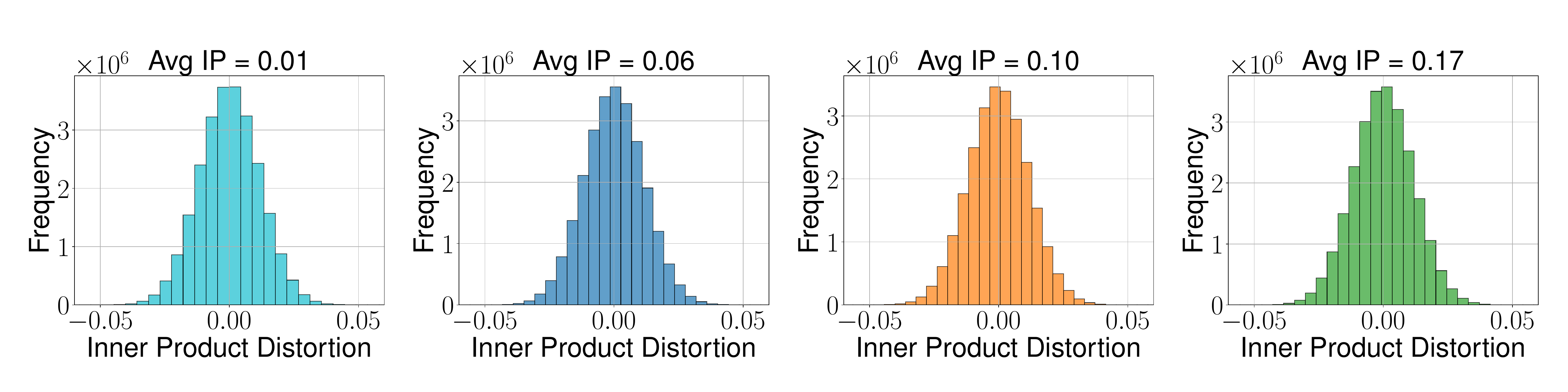}
    \end{subfigure}
    \begin{subfigure}{\textwidth}
        \caption{$\TurboQuant_{\tt mse}$}
        \centering
        \includegraphics[width=\textwidth]{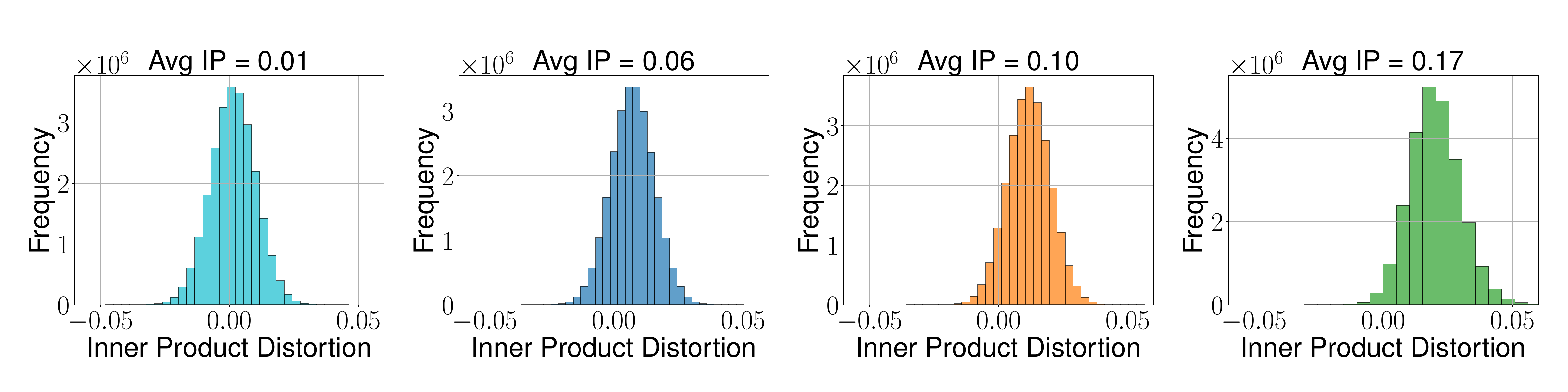}
    \end{subfigure}
    \caption{The variance of Inner-product error remains constant for $\TurboQuant_{\tt prod}$, while in $\TurboQuant_{\tt mse}$ increases with the average inner product. Bit-width is $b=2$.}
    \label{fig:inner_prod_2bit}
\end{center}
\end{figure}

Both methods are applied to the task of inner product estimation by quantizing training set and analyzing the distortion in inner product calculations across different bit widths. As shown in \cref{fig:inner_distortion}, increasing the bit width reduces variance in both methods. However, when used for inner product estimation, \(\TurboQuant_\text{\tt mse}\) introduces bias. This bias diminishes as the bit width increases and eventually converges to zero.

The experimental results, illustrated in \cref{fig:inner_distortion}, confirm that \(\TurboQuant_\text{\tt prod}\) remains unbiased for inner product estimation across all bit widths, while \(\TurboQuant_\text{\tt mse}\) gradually improves with increasing bit width.

As observed in \cref{fig:inner_prod_2bit}, when quantizing to 2 bits, the variance remains constant regardless of the inner product of the original vector in the $\TurboQuant\text{\tt prod}$ approach. However, the same plot indicates that the bias in the $\TurboQuant\text{\tt mse}$ approach is dependent on the average inner product. As the average inner product increases, the bias also increases.

\begin{figure}[h]
    \begin{center}
    \begin{tabular}{cc}
        \begin{subfigure}{0.4\textwidth}
            \caption{\textbf{inner-prod error}}
            \centering
            \includegraphics[width=\textwidth]{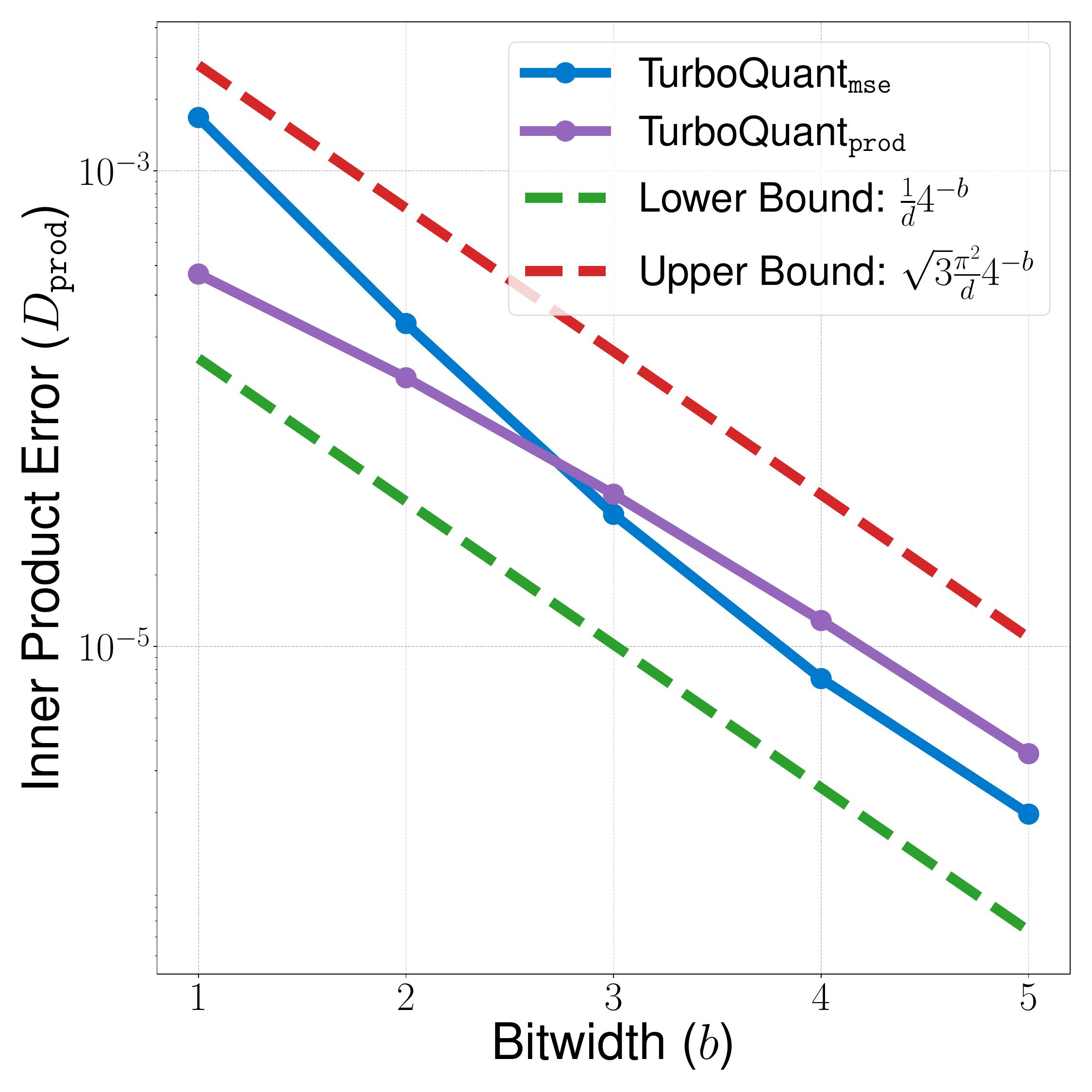}
        \end{subfigure}&
        \begin{subfigure}{0.4\textwidth}
            \caption{\textbf{MSE}}
            \centering
            \includegraphics[width=\textwidth]{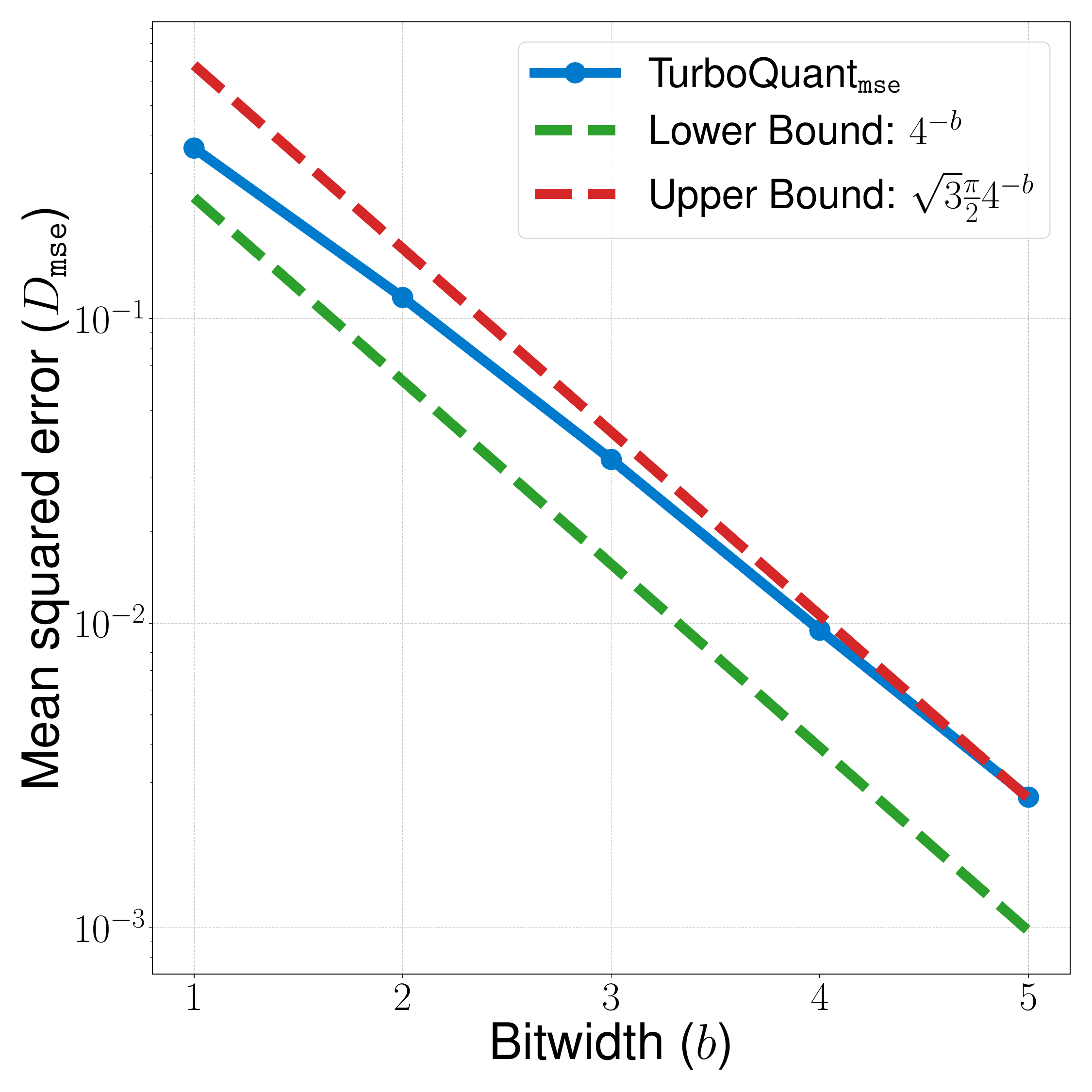}
    \end{subfigure}
    \label{fig:lower_upper}
    \end{tabular}
    \caption{Comparison of inner-product error and MSE against theoretical bounds across different bit ratios.}
    \end{center}
\end{figure}

Along with the histograms, we also plot \cref{fig:lower_upper} the average inner product error and MSE between the original and quantized vectors across different bit ratios. These plots are drawn alongside the upper and lower bounds established in our theoretical analysis. Our observations confirm that the results align with the theoretical predictions. Specifically, for inner product estimation, the $\TurboQuant\text{\tt prod}$ approach performs better at lower bit ratios. However, as the bit count increases, $\TurboQuant\text{\tt mse}$ reduces bias and ultimately achieves superior performance in inner product estimation.

\subsection{Needle-In-A-Haystack} \label{sec:niah}

\begin{figure}[h]
    \centering
    \begin{tabular}{ccc}
        \begin{minipage}{0.33\textwidth}
            \centering
            \textbf{SnapKV} \\ Score: 0.858 \\ 
            \includegraphics[width=\textwidth]{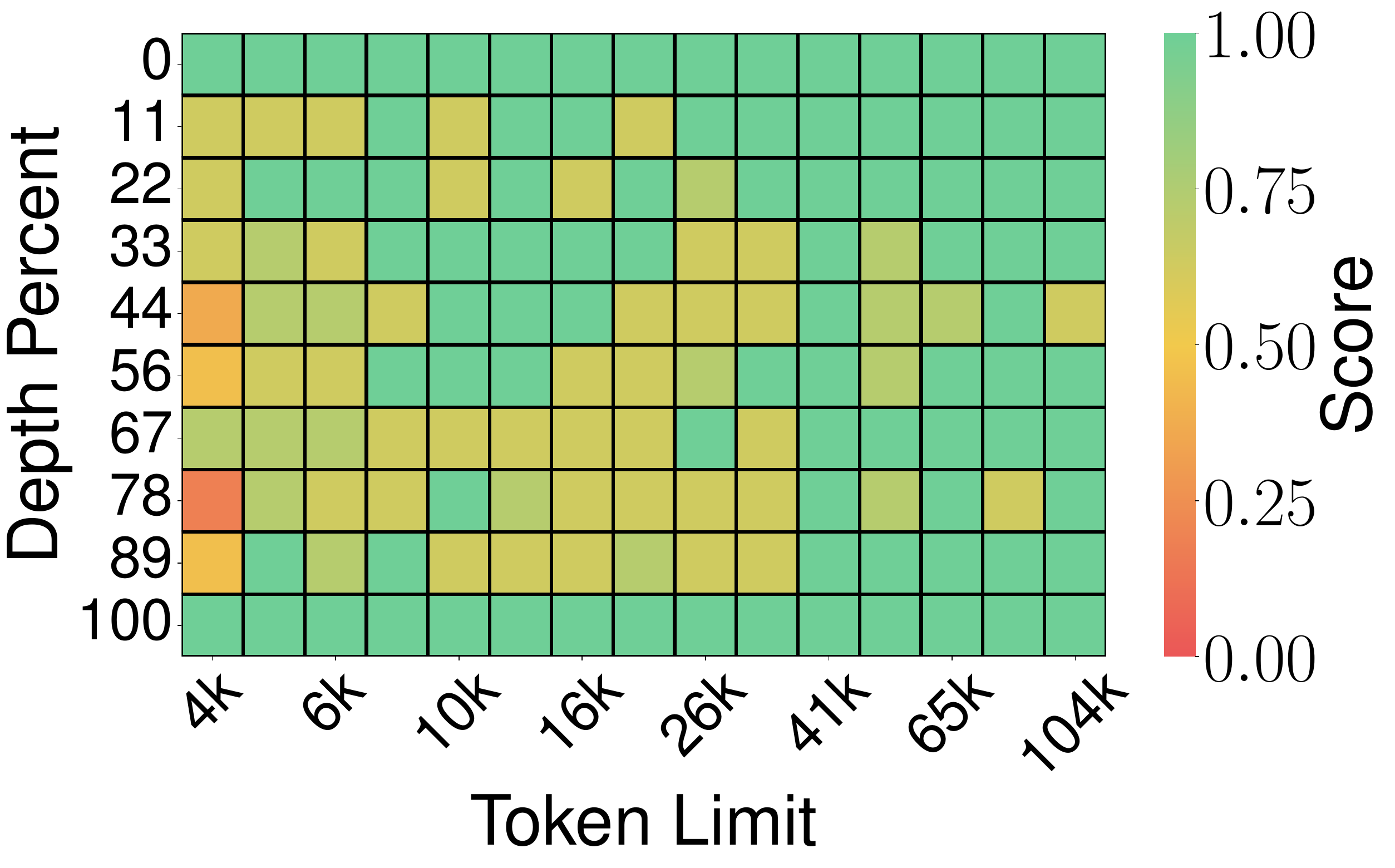}
        \end{minipage} &
        \begin{minipage}{0.33\textwidth}
            \centering
            \textbf{PyramidKV} \\ Score: 0.895 \\ 
            \includegraphics[width=\textwidth]{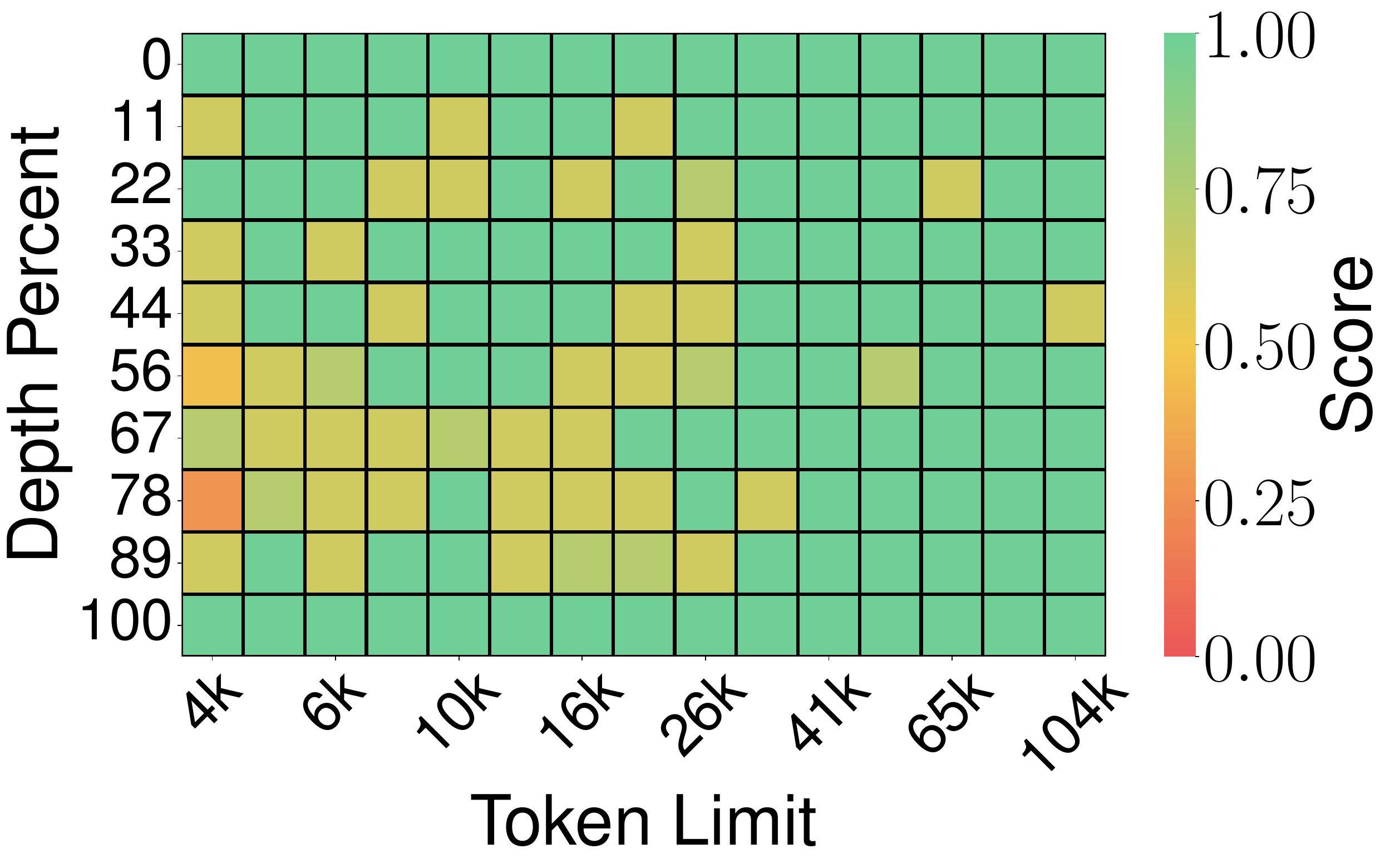}
        \end{minipage} &
        \begin{minipage}{0.33\textwidth}
            \centering
            \textbf{KIVI} \\ Score: 0.981 \\ 
            \includegraphics[width=\textwidth]{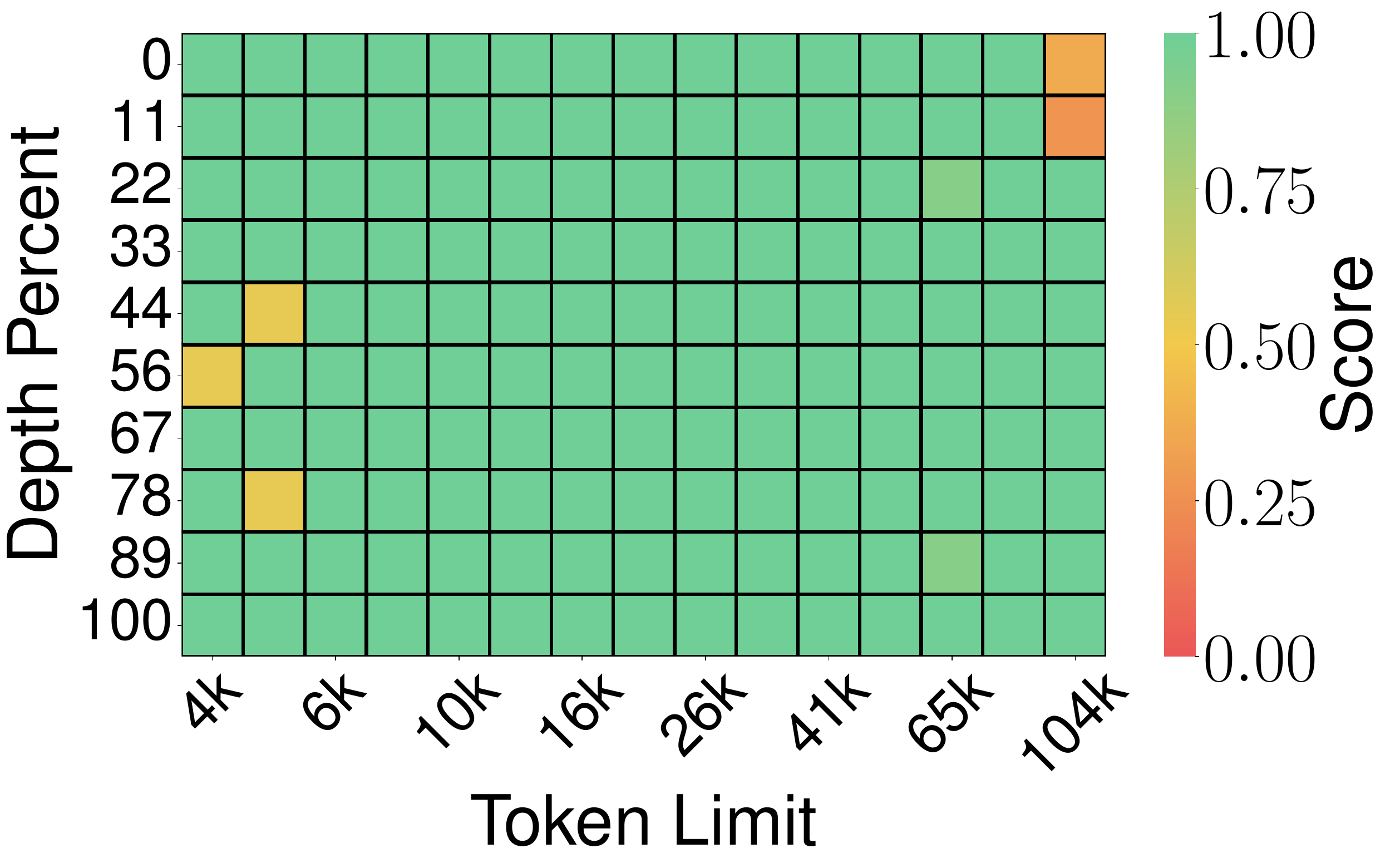}
        \end{minipage}\vspace{0.5cm} \\
        \begin{minipage}{0.33\textwidth}
            \centering
            \textbf{PolarQuant} \\ Score: 0.995 \\ 
            \includegraphics[width=\textwidth]{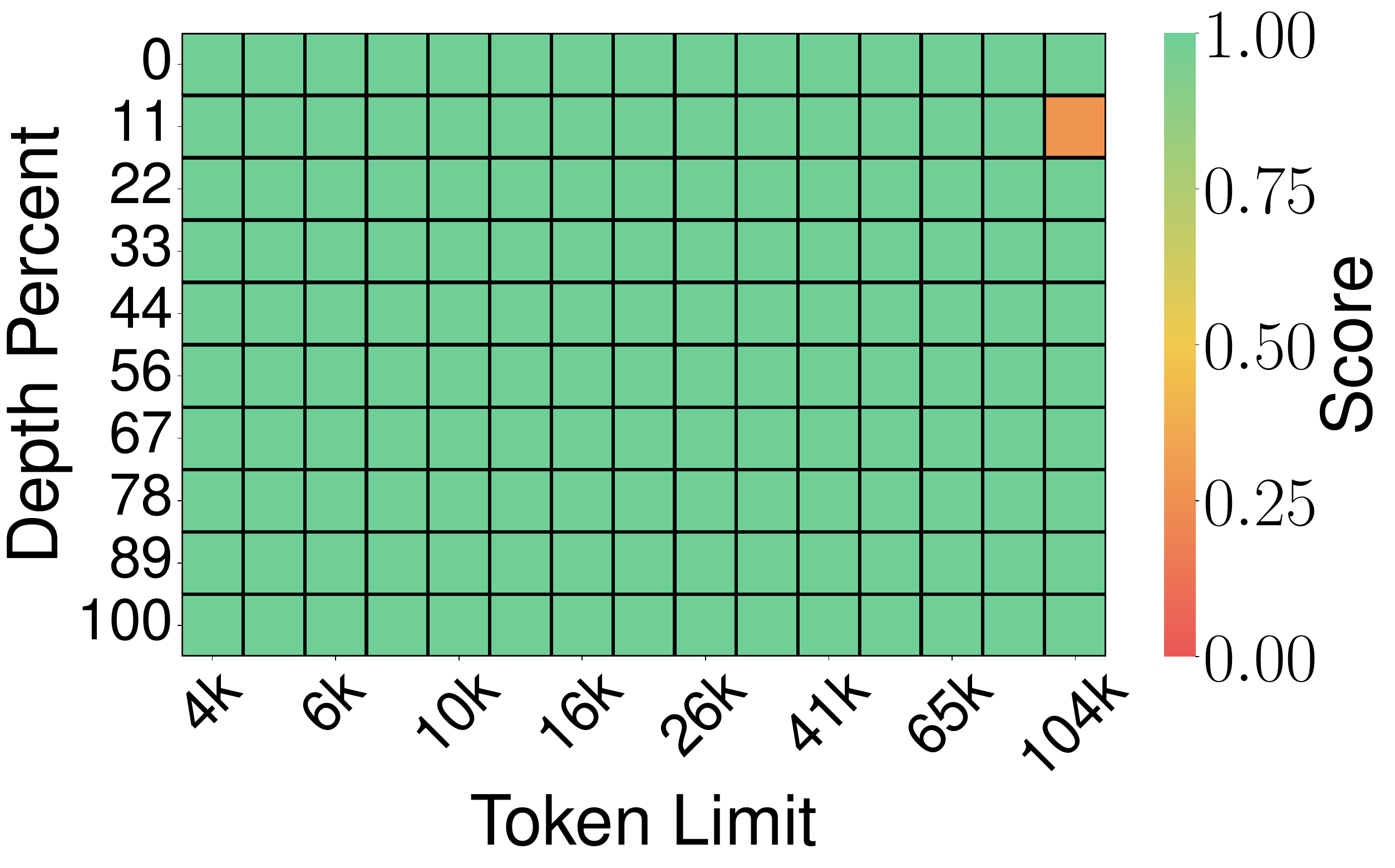}
        \end{minipage} &
        \begin{minipage}{0.33\textwidth}
            \centering
            \textbf{Full-Precision} \\ Score: 0.997 \\ 
            \includegraphics[width=\textwidth]{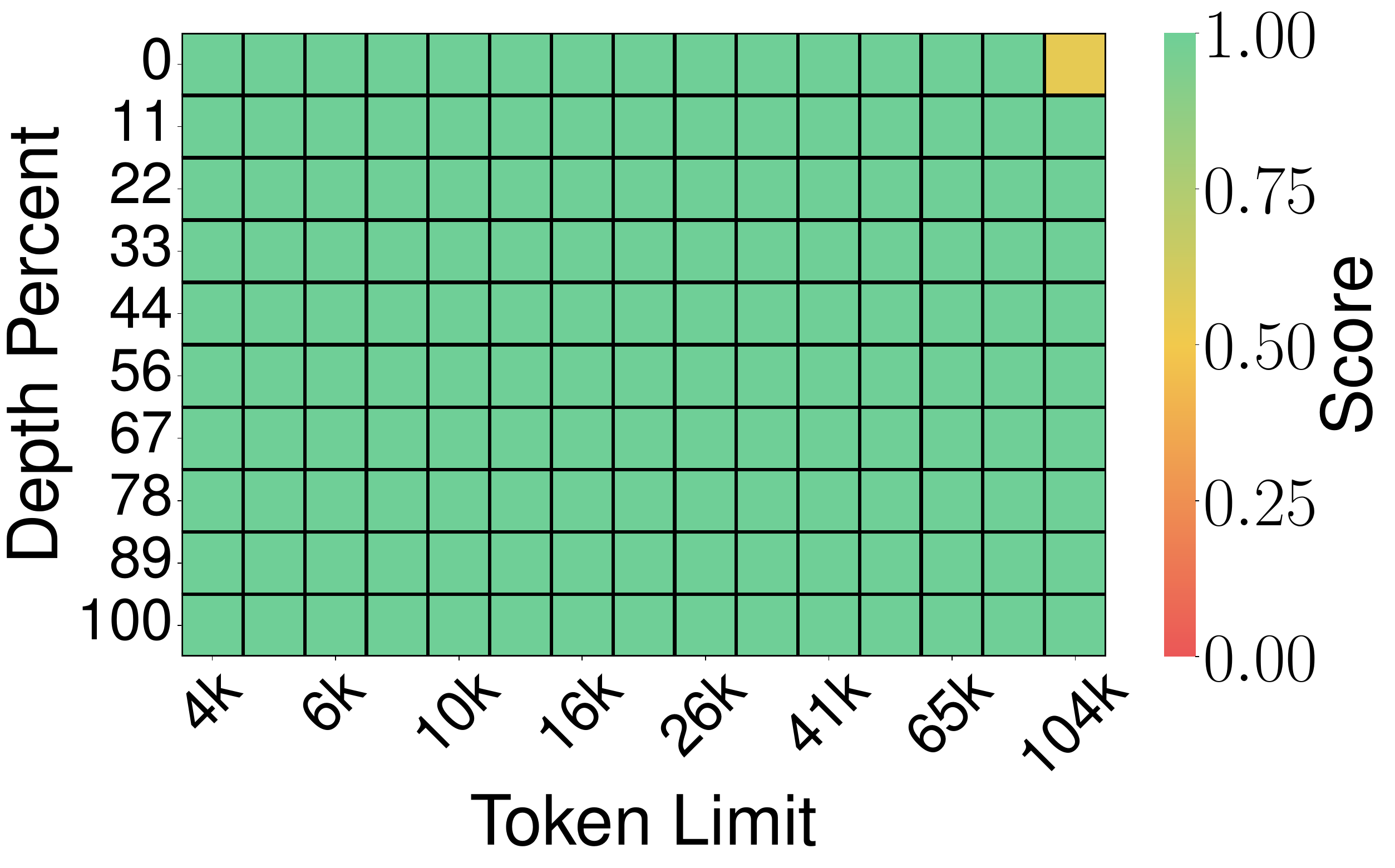}
        \end{minipage} &
        \begin{minipage}{0.33\textwidth}
            \centering
            \textbf{\TurboQuant} \\ Score: 0.997 \\ 
            \includegraphics[width=\textwidth]{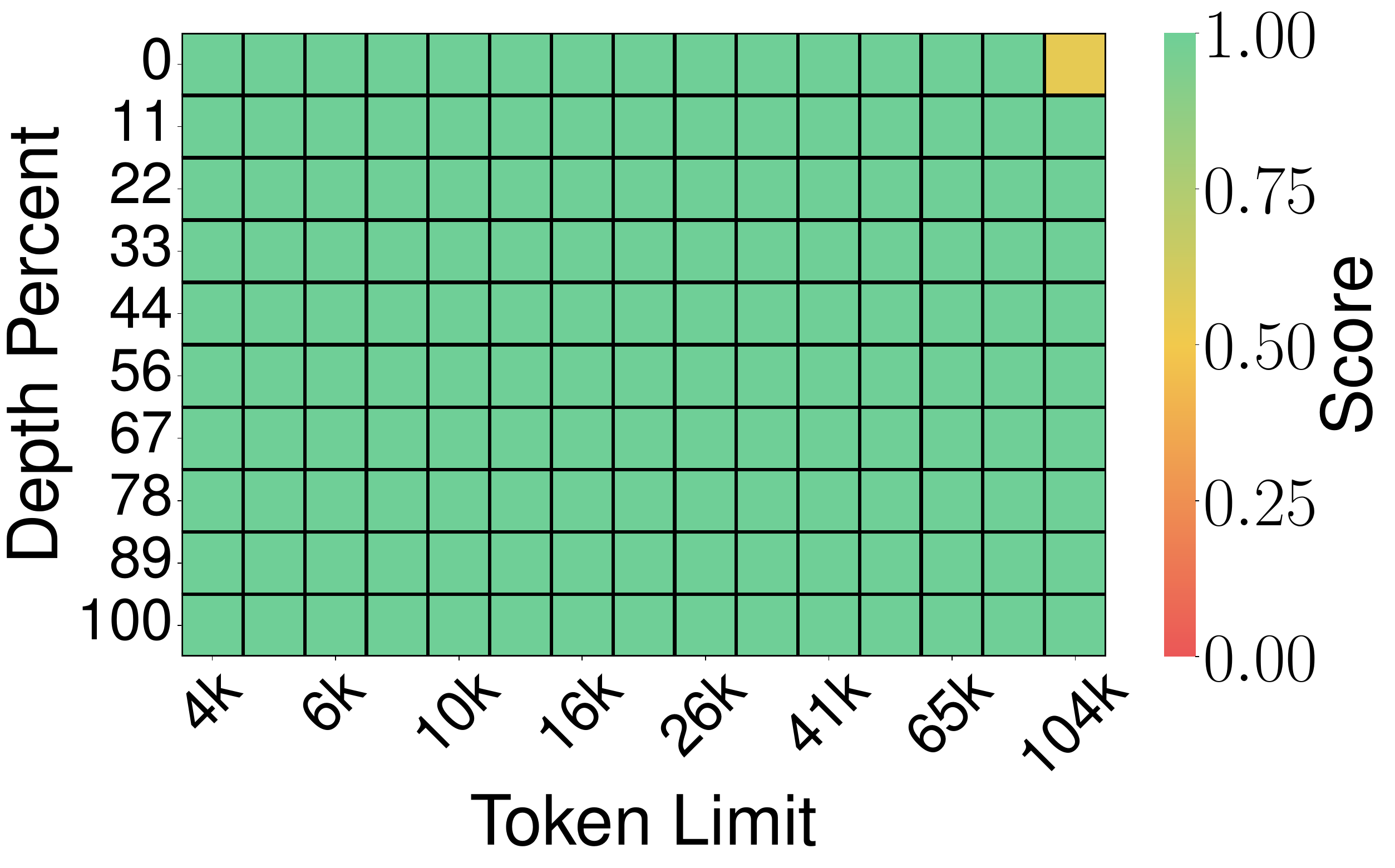}
        \end{minipage} \\
    \end{tabular}
    \caption{Evaluation of \texttt{Llama-3.1-8B-Instruct} on the ``Needle-In-A-Haystack" test, where a model must retrieve a hidden sentence from long-context sequences. While some methods struggle with recall, \TurboQuant, despite being more than \textbf{$4 \times$} quantized, achieves the same exact performance as the uncompressed baseline.}
    \label{fig:needle_comparison}
\end{figure}

The ``Needle-In-A-Haystack Test"" \cite{niah} is a benchmark designed to evaluate a model’s ability to retrieve specific information embedded within a long document. The test involves placing a unique sentence (the "needle") at an arbitrary location within a much larger text (the "haystack") and assessing whether the model can successfully extract it.

Following the experimental setup of \citet{fu2024data}, we conduct evaluations using the \llama{} model. To analyze performance across different input sequence lengths, we vary the document size from \emph{4k to 104k tokens}. The primary metric used for evaluation is the \emph{recall score}, which measures how accurately the model retrieves the hidden sentence.

For comparison, we benchmark our approach against several state-of-the-art memory-efficient methods, including PolarQuant~\cite{han2025polarquant}, SnapKV~\cite{li2024snapkv}, PyramidKV~\cite{cai2024pyramidkv}, and KIVI~\cite{liu2024kivi}. Each method is tested under a memory compression ratio of 0.25, meaning that only 25\% of the full KV cache is utilized.

The results, illustrated in \cref{fig:needle_comparison}, reveal that quantization methods with theoretical guarantees, such as PolarQuant and \TurboQuant, outperform token-level compression techniques like SnapKV and PyramidKV, as well as scalar quantization approaches like KIVI, which lack formal theoretical guarantees. Notably, \TurboQuant achieves identical performance to the full-precision model, even at $4 \times$ compression, making it a robust solution for long-context processing.


\subsection{End-to-end Generation on LongBench}
\label{sec:exp_kv_end2end}

We experiment with various KV cache compression algorithms on the LongBench dataset~\cite{bai2023longbench}, which encompasses a broad range of long-text scenarios, including single- and multi-document question-answering, summarization, few-shot learning, synthetic tasks, and code completion. To ensure a balanced evaluation across different context lengths, we employ \textbf{LongBench-E}, a subset designed with a more uniform length distribution. This enables a fair assessment of each model’s performance across varying context sizes, making it a more reliable benchmark for evaluating compression techniques.

We compare \TurboQuant against the leading baseline methods introduced in \cref{sec:niah}, using both \llama{} and \mistral{}. Unlike existing approaches such as \textbf{KIVI} and \textbf{PolarQuant}, which leave generated tokens unquantized, our method applies quantization even during the streaming generation process.

As shown in \cref{tab:performance_comparison}, our approach outperforms other methods for both \llama{} and \mistral{}, achieving significantly higher average scores. We evaluate our method using \textbf{2.5-bit} and \textbf{3.5-bit} quantization during text generation. 
These non-integer bit precisions result from our strategy of splitting channels into outlier and non-outlier sets, and applying two independent instances of \TurboQuant to each, allocating higher bit precision to outliers. This outlier treatment strategy is consistent with prior work \cite{zandieh2024qjl, su2025rotatekvaccuraterobust2bit} . 
For example, in our 2.5-bit setup, 32 outlier channels are quantized at 3 bits, while the remaining 96 channels use 2 bits, leading to an effective bit precision of \( (32 \times 3 + 96 \times 2) / 128 = 2.5 \). For 3.5-bit quantization, a different ratio of outliers and regular channels leads to a higher effective bit precision. 
Despite using fewer bits than competing techniques, \TurboQuant maintains performance comparable to unquantized models. Remarkably, we achieve this while compressing quantized vectors by at least a factor of $4.5 \times$.



\setlength{\tabcolsep}{12pt}
\def\arraystretch{0.8}%
\begin{table}[t]
\centering
\scalebox{0.9}{
\small
\setlength{\tabcolsep}{4pt}
\begin{tabular}{@{}lcccccccc@{}}
        \toprule 
        \textbf{Method} & \textbf{KV Size} & \textbf{SingleQA} & \textbf{MultiQA} & \textbf{Summarization} & \textbf{Few shot} & \textbf{Synthetic} & \textbf{Code} &\textbf{Average} \\ 
        \midrule
        \multicolumn{9}{c}{\llama{}} \\
        Full Cache    & $16$ & $45.29$ & $45.16$    & $26.55$    & $68.38$   & $59.54$    & $46.28$ & ${50.06}$ \\
        \midrule
        \midrule
        KIVI & $3$ & $43.38$ & $37.99$ & $27.16$    & $68.38$   & $59.50$    & $44.68$ & $48.50$ \\ 
        \midrule
        KIVI & $5$ & $45.04$ & $45.70$ & $26.47$    & $68.57$   & $59.55$    & $46.41$ & $50.16$ \\ 
        \midrule
        \midrule
        PolarQuant & $3.9$ & $45.18$ & $44.48$ & $26.23$    & $68.25$   & $60.07$    & $45.24$ & $49.78$ \\ 
        \midrule
        \midrule
        \TurboQuant (ours) & $2.5$ & $44.16$ & $44.96$ & $24.80$    & $68.01$   & $59.65$    & $45.76$ & $49.44$ \\ 
        \midrule
        \TurboQuant (ours) & $3.5$ & $45.01$ & $45.31$ & $26.00$    & $68.63$   & $59.95$    & $46.17$ & $50.06$ \\ 
        \midrule
        \multicolumn{9}{c}{\mistral{}} \\
        \midrule
        Full Cache    & $16$ & $47.53$ & $49.06$    & $26.09$    & $66.83$   & $53.50$    & $47.90$ & ${49.89}$ \\
        \midrule
        \midrule
        \TurboQuant (ours) & $2.5$ & $48.38$ & $49.22$ & $24.91$    & $66.69$   & $53.17$  & $46.83$ & $49.62$ \\ 
    \end{tabular}
}
\caption{LongBench-V1~\cite{bai2023longbench} results of various KV cache compression methods on \llama{}.}
\label{tab:performance_comparison}
\end{table}

\subsection{Near Neighbour Search Experiments}
\label{sec:nn_exp}
In this section, we establish the strength of our proposed method, even in the context of near-neighbor search. We conduct our experiments using the DBpedia \cite{thakur2021beir} Entities dataset, which has been encoded into 1536-dimensional\footnote{https://huggingface.co/datasets/Qdrant/dbpedia-entities-openai3-text-embedding-3-large-1536-1M} and 3072-dimensional \footnote{https://huggingface.co/datasets/Qdrant/dbpedia-entities-openai3-text-embedding-3-large-3072-1M} spaces using OpenAI3 embeddings. Additionally, we evaluate performance on a lower-dimensional dataset, utilizing the standard GloVe \cite{pennington2014glove} embeddings.
To construct our experimental setup, we randomly sample 100,000 data points from the dataset, denoted as training set, which serves as our primary training and evaluation set. Furthermore, we extract 1,000 distinct entries, denoted as query set, to be used as query points for datasets that do not explicitly provide a query set. For the GloVe dataset, we use a pre-existing query set consisting of 10,000 points.

We compare our method, \TurboQuant, against two baseline quantization approaches: Product Quantization (PQ) and RabitQ \cite{gao2024practical}. To ensure a fair comparison, we quantize the dataset training set using all three methods and evaluate their performance based on recall ratio at top-k, denoted as 1@k. Specifically, this metric assesses how often the true top inner product result is captured within the top-k approximated results returned by each algorithm.

\begin{table}[ht]
    \centering
    \begin{tabular}{lccc}
        \toprule
        Approach & d=200 & d=1536 & d=3072 \\
        \midrule
        Product Quantization & 37.04 & 239.75 & 494.42 \\
        RabitQ & 597.25 & 2267.59 & 3957.19 \\
        \TurboQuant & 0.0007 & 0.0013 & 0.0021 \\
        \bottomrule
    \end{tabular}
    \caption{Quantization time (in seconds) for different approaches across various dimensions using 4-bit quantization.}
    \label{tab:quantizing_time}
\end{table}

\textbf{Product Quantization (PQ)} relies on the k-means algorithm to construct codebooks, which require separate storage. 
As the number of bits increases, the size of the codebook grows exponentially, leading to additional storage overhead. In our experiments, we carefully tuned the parameters to match the bit allocation of other methods. 
The most efficient implementation, designed for rapid querying, employs AVX2 In-Register Lookup Tables (LUTs). 
Specifically, it uses LUT16 with (l = 16) codewords. However, we observed substantial quality degradation at this configuration. 
To achieve a balance between speed and accuracy, we opted for a version of PQ that uses LUT256, which contains 256 codewords. 
For 2-bit quantization, it groups 4 coordinates per lookup, while for 4-bit quantization, it groups 2 coordinates per lookup. 
Notably, since we use the same dataset for both training and evaluation, PQ benefits from an inherent advantage in this setup.

\textbf{RabitQ.} 
Unlike PQ, RabitQ lacks a fully vectorized implementation, making it impossible to leverage GPU acceleration. As a result, it runs significantly slower on CPU. Additionally, the method incurs extra computational overheads that we do not explicitly account for in the bit ratio comparisons. While RabitQ claims a certain bit ratio, in practice, it utilizes more bits than reported due to these inefficiencies.

Despite the advantages granted to the baseline methods, \TurboQuant consistently outperforms both Product Quantization and RabitQ in terms of recall ratio across all experiments. This demonstrates the robustness and efficiency of our approach, making it a compelling alternative for high-dimensional quantization-based search tasks.

\begin{figure}[th]
    \centering
    \begin{tabular}{ccc}
        \begin{subfigure}{0.3\textwidth}
            \centering
            \caption{GloVe - d=200}
            \centering
            \includegraphics[width=\textwidth]{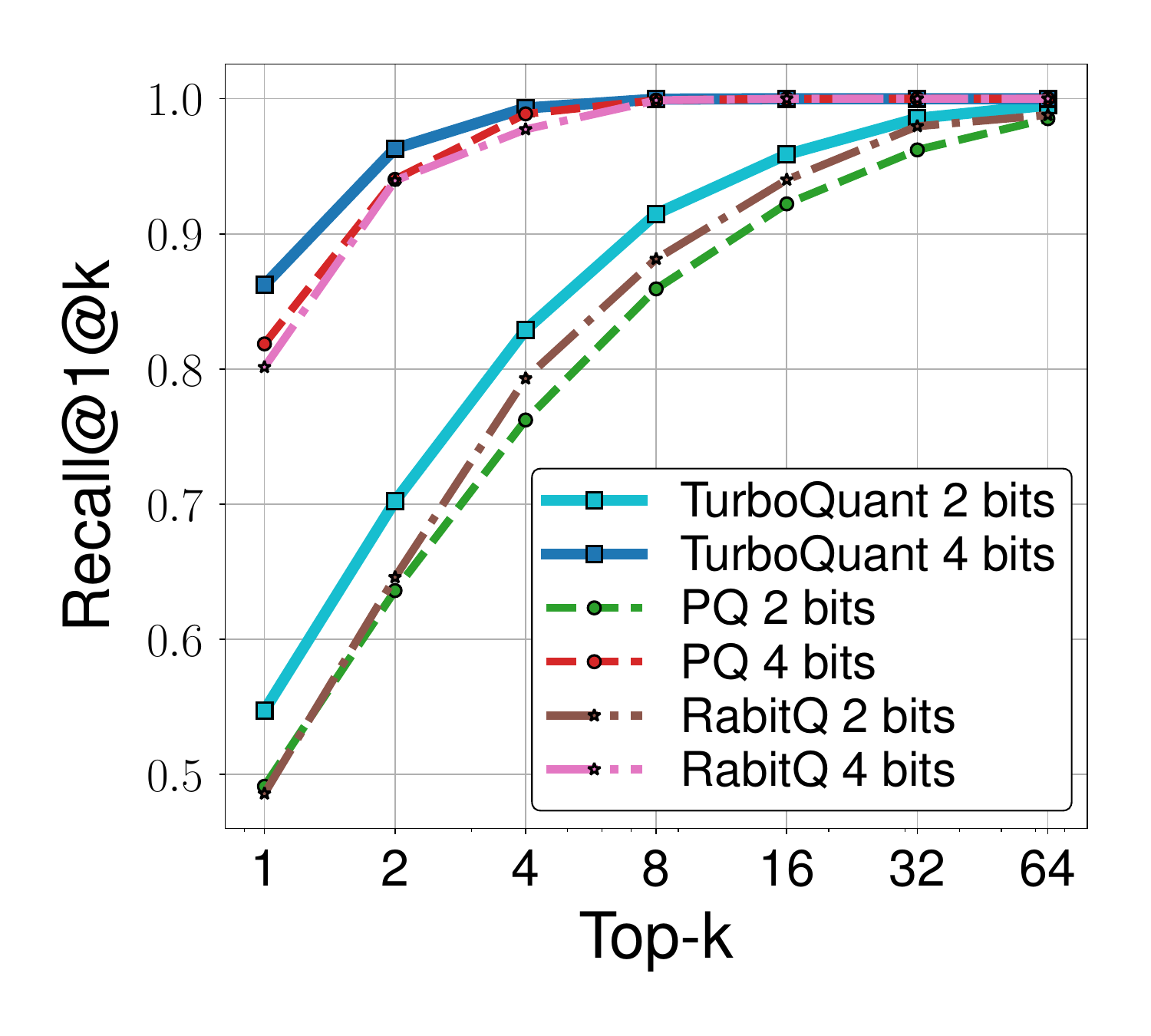}
        \end{subfigure}&
        \begin{subfigure}{0.3\textwidth}
        \centering
        \caption{OpenAI3 - d=1536}
            \centering
            \includegraphics[width=\textwidth]{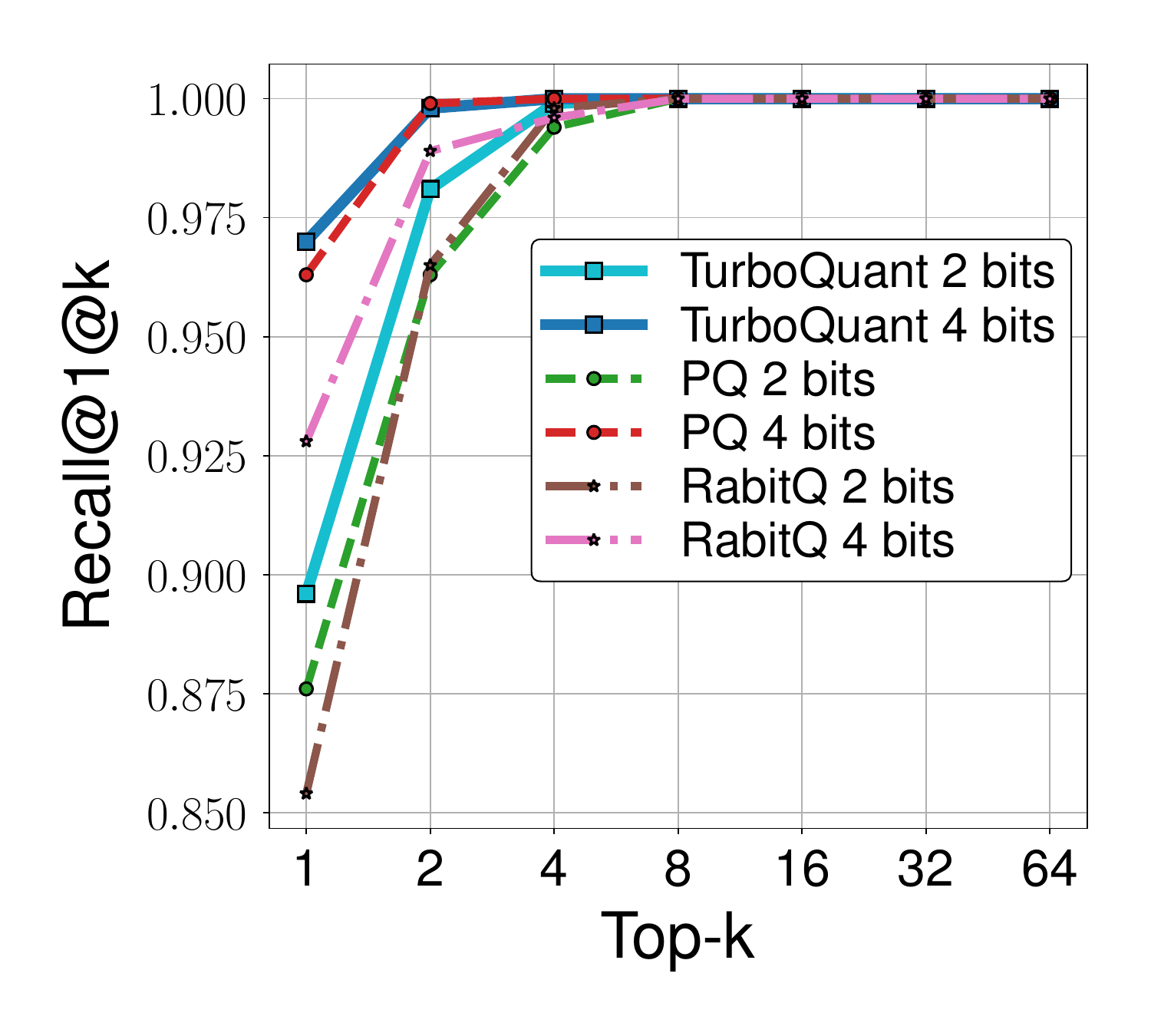}
        \end{subfigure}&
        \begin{subfigure}{0.3\textwidth}
        \centering
        \caption{OpenAI3 - d=3072}
            \centering
            \includegraphics[width=\textwidth]{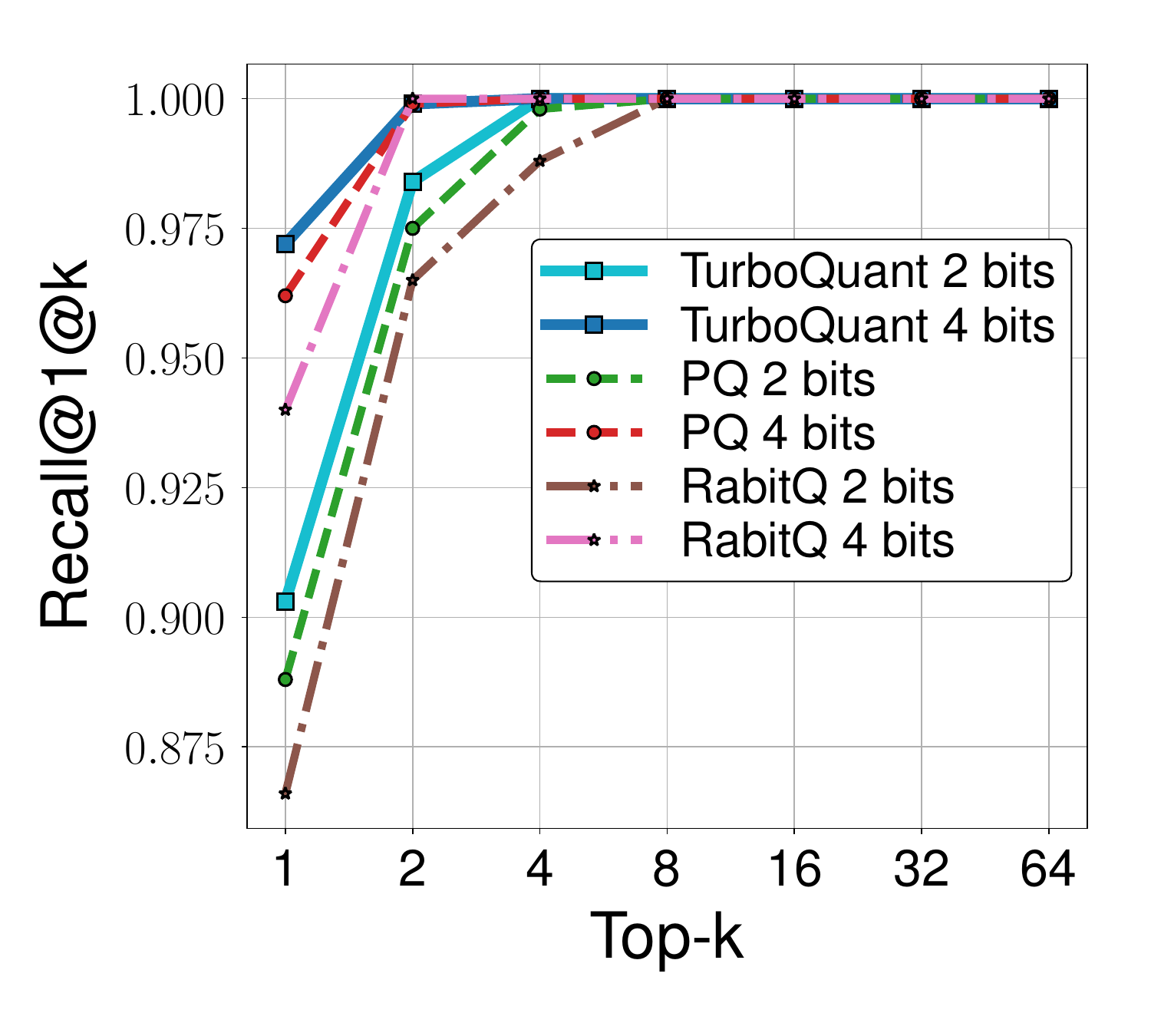}
    \end{subfigure}
    \end{tabular}
    \caption{Recall comparison on different datasets with different embedding dimensions.}
\end{figure}

\bibliography{paper}

\begin{thebibliography}{66}
\providecommand{\natexlab}[1]{#1}
\providecommand{\url}[1]{\texttt{#1}}
\expandafter\ifx\csname urlstyle\endcsname\relax
  \providecommand{\doi}[1]{doi: #1}\else
  \providecommand{\doi}{doi: \begingroup \urlstyle{rm}\Url}\fi

\bibitem[ela(2025)]{elasticsearch}
Elastic search., 2025.
\newblock \url{https://www.elastic.co/enterprise-search/vector-search}.

\bibitem[gdr(2025)]{gdrant}
Qdrant vectore search., 2025.
\newblock \url{https://qdrant.tech/}.

\bibitem[pgv(2025)]{pgvector}
Pgvector search., 2025.
\newblock \url{https://github.com/pgvector/pgvector/}.

\bibitem[pin(2025)]{pinecone}
Pinecone vectore database., 2025.
\newblock \url{https://www.pinecone.io/}.

\bibitem[Achiam et~al.(2023)Achiam, Adler, Agarwal, Ahmad, Akkaya, Aleman, Almeida, Altenschmidt, Altman, Anadkat, et~al.]{achiam2023gpt}
Achiam, J., Adler, S., Agarwal, S., Ahmad, L., Akkaya, I., Aleman, F.~L., Almeida, D., Altenschmidt, J., Altman, S., Anadkat, S., et~al.
\newblock Gpt-4 technical report.
\newblock \emph{arXiv preprint arXiv:2303.08774}, 2023.

\bibitem[Ainslie et~al.(2023)Ainslie, Lee-Thorp, de~Jong, Zemlyanskiy, Lebron, and Sanghai]{ainslie2023gqa}
Ainslie, J., Lee-Thorp, J., de~Jong, M., Zemlyanskiy, Y., Lebron, F., and Sanghai, S.
\newblock Gqa: Training generalized multi-query transformer models from multi-head checkpoints.
\newblock In \emph{Proceedings of the 2023 Conference on Empirical Methods in Natural Language Processing}, pp.\  4895--4901, 2023.

\bibitem[Anthropic(2024)]{claude}
Anthropic.
\newblock Claude, 2024.
\newblock \url{https://www.anthropic.com/news/claude-3-family}.

\bibitem[Ashkboos et~al.(2024)Ashkboos, Mohtashami, Croci, Li, Cameron, Jaggi, Alistarh, Hoefler, and Hensman]{ashkboos2024quarot}
Ashkboos, S., Mohtashami, A., Croci, M.~L., Li, B., Cameron, P., Jaggi, M., Alistarh, D., Hoefler, T., and Hensman, J.
\newblock Quarot: Outlier-free 4-bit inference in rotated llms.
\newblock \emph{arXiv preprint arXiv:2404.00456}, 2024.

\bibitem[Babenko \& Lempitsky(2014)Babenko and Lempitsky]{babenko2014additive}
Babenko, A. and Lempitsky, V.
\newblock Additive quantization for extreme vector compression.
\newblock In \emph{Proceedings of the IEEE Conference on Computer Vision and Pattern Recognition}, pp.\  931--938, 2014.

\bibitem[Bai et~al.(2023)Bai, Lv, Zhang, Lyu, Tang, Huang, Du, Liu, Zeng, Hou, Dong, Tang, and Li]{bai2023longbench}
Bai, Y., Lv, X., Zhang, J., Lyu, H., Tang, J., Huang, Z., Du, Z., Liu, X., Zeng, A., Hou, L., Dong, Y., Tang, J., and Li, J.
\newblock Longbench: A bilingual, multitask benchmark for long context understanding.
\newblock \emph{arXiv preprint arXiv:2308.14508}, 2023.

\bibitem[Beltagy et~al.(2020)Beltagy, Peters, and Cohan]{beltagy2020longformer}
Beltagy, I., Peters, M.~E., and Cohan, A.
\newblock Longformer: The long-document transformer.
\newblock \emph{arXiv preprint arXiv:2004.05150}, 2020.

\bibitem[Cai et~al.(2024)Cai, Zhang, Gao, Liu, Liu, Lu, Xiong, Dong, Chang, Hu, et~al.]{cai2024pyramidkv}
Cai, Z., Zhang, Y., Gao, B., Liu, Y., Liu, T., Lu, K., Xiong, W., Dong, Y., Chang, B., Hu, J., et~al.
\newblock Pyramidkv: Dynamic kv cache compression based on pyramidal information funneling.
\newblock \emph{arXiv preprint arXiv:2406.02069}, 2024.

\bibitem[Chee et~al.(2023)Chee, Cai, Kuleshov, and De~Sa]{chee2023quip}
Chee, J., Cai, Y., Kuleshov, V., and De~Sa, C.~M.
\newblock Quip: 2-bit quantization of large language models with guarantees.
\newblock \emph{Advances in Neural Information Processing Systems}, 36:\penalty0 4396--4429, 2023.

\bibitem[Cover(1999)]{cover1999elements}
Cover, T.~M.
\newblock \emph{Elements of information theory}.
\newblock John Wiley \& Sons, 1999.

\bibitem[Dai et~al.(2024)Dai, Deng, Zhao, Xu, Gao, Chen, Li, Zeng, Yu, Wu, et~al.]{dai2024deepseekmoe}
Dai, D., Deng, C., Zhao, C., Xu, R., Gao, H., Chen, D., Li, J., Zeng, W., Yu, X., Wu, Y., et~al.
\newblock Deepseekmoe: Towards ultimate expert specialization in mixture-of-experts language models.
\newblock \emph{arXiv preprint arXiv:2401.06066}, 2024.

\bibitem[Dettmers et~al.(2022)Dettmers, Lewis, Belkada, and Zettlemoyer]{dettmers2022gpt3}
Dettmers, T., Lewis, M., Belkada, Y., and Zettlemoyer, L.
\newblock Gpt3. int8 (): 8-bit matrix multiplication for transformers at scale.
\newblock \emph{Advances in Neural Information Processing Systems}, 35:\penalty0 30318--30332, 2022.

\bibitem[Dong et~al.(2024)Dong, Cheng, Qin, and Wang]{dong2024qaq}
Dong, S., Cheng, W., Qin, J., and Wang, W.
\newblock Qaq: Quality adaptive quantization for llm kv cache.
\newblock \emph{arXiv preprint arXiv:2403.04643}, 2024.

\bibitem[Dubey et~al.(2024)Dubey, Jauhri, Pandey, Kadian, Al-Dahle, Letman, Mathur, Schelten, Yang, Fan, et~al.]{dubey2024llama}
Dubey, A., Jauhri, A., Pandey, A., Kadian, A., Al-Dahle, A., Letman, A., Mathur, A., Schelten, A., Yang, A., Fan, A., et~al.
\newblock The llama 3 herd of models.
\newblock \emph{arXiv preprint arXiv:2407.21783}, 2024.

\bibitem[Edge et~al.(2024)Edge, Trinh, Cheng, Bradley, Chao, Mody, Truitt, and Larson]{edge2024local}
Edge, D., Trinh, H., Cheng, N., Bradley, J., Chao, A., Mody, A., Truitt, S., and Larson, J.
\newblock From local to global: A graph rag approach to query-focused summarization.
\newblock \emph{arXiv preprint arXiv:2404.16130}, 2024.

\bibitem[Frantar et~al.(2022)Frantar, Ashkboos, Hoefler, and Alistarh]{frantar2022gptq}
Frantar, E., Ashkboos, S., Hoefler, T., and Alistarh, D.
\newblock Gptq: Accurate post-training quantization for generative pre-trained transformers.
\newblock \emph{arXiv preprint arXiv:2210.17323}, 2022.

\bibitem[Fu et~al.(2024)Fu, Panda, Niu, Yue, Hajishirzi, Kim, and Peng]{fu2024data}
Fu, Y., Panda, R., Niu, X., Yue, X., Hajishirzi, H., Kim, Y., and Peng, H.
\newblock Data engineering for scaling language models to 128k context.
\newblock \emph{arXiv preprint arXiv:2402.10171}, 2024.
\newblock URL \url{{https://github.com/FranxYao/Long-Context-Data-Engineering}}.

\bibitem[Gao et~al.(2024)Gao, Gou, Xu, Yang, Long, and Wong]{gao2024practical}
Gao, J., Gou, Y., Xu, Y., Yang, Y., Long, C., and Wong, R. C.-W.
\newblock Practical and asymptotically optimal quantization of high-dimensional vectors in euclidean space for approximate nearest neighbor search.
\newblock \emph{arXiv preprint arXiv:2409.09913}, 2024.

\bibitem[Gao et~al.(2023)Gao, Xiong, Gao, Jia, Pan, Bi, Dai, Sun, Wang, and Wang]{gao2023retrieval}
Gao, Y., Xiong, Y., Gao, X., Jia, K., Pan, J., Bi, Y., Dai, Y., Sun, J., Wang, H., and Wang, H.
\newblock Retrieval-augmented generation for large language models: A survey.
\newblock \emph{arXiv preprint arXiv:2312.10997}, 2, 2023.

\bibitem[Ge et~al.(2013)Ge, He, Ke, and Sun]{ge2013optimized}
Ge, T., He, K., Ke, Q., and Sun, J.
\newblock Optimized product quantization for approximate nearest neighbor search.
\newblock In \emph{Proceedings of the IEEE conference on computer vision and pattern recognition}, pp.\  2946--2953, 2013.

\bibitem[Gersho(1979)]{gersho1979asymptotically}
Gersho, A.
\newblock Asymptotically optimal block quantization.
\newblock \emph{IEEE Transactions on information theory}, 25\penalty0 (4):\penalty0 373--380, 1979.

\bibitem[Gersho(1982)]{gersho1982structure}
Gersho, A.
\newblock On the structure of vector quantizers.
\newblock \emph{IEEE Transactions on Information Theory}, 28\penalty0 (2):\penalty0 157--166, 1982.

\bibitem[Guo et~al.(2020)Guo, Sun, Lindgren, Geng, Simcha, Chern, and Kumar]{guo2020accelerating}
Guo, R., Sun, P., Lindgren, E., Geng, Q., Simcha, D., Chern, F., and Kumar, S.
\newblock Accelerating large-scale inference with anisotropic vector quantization.
\newblock In \emph{International Conference on Machine Learning}, pp.\  3887--3896. PMLR, 2020.

\bibitem[Han et~al.(2025{\natexlab{a}})Han, Kacham, Karbasi, Mirrokni, and Zandieh]{han2025polarquant}
Han, I., Kacham, P., Karbasi, A., Mirrokni, V., and Zandieh, A.
\newblock Polarquant: Quantizing kv caches with polar transformation.
\newblock \emph{arXiv preprint arXiv:2502.02617}, 2025{\natexlab{a}}.

\bibitem[Han et~al.(2025{\natexlab{b}})Han, Kapralov, Kochetkova, Sheth, and Zandieh]{han2025balancekv}
Han, I., Kapralov, M., Kochetkova, E., Sheth, K., and Zandieh, A.
\newblock Balancekv: Kv cache compression through discrepancy theory.
\newblock \emph{arXiv preprint arXiv:2502.07861}, 2025{\natexlab{b}}.

\bibitem[Hooper et~al.(2024)Hooper, Kim, Mohammadzadeh, Mahoney, Shao, Keutzer, and Gholami]{hooper2024kvquant}
Hooper, C., Kim, S., Mohammadzadeh, H., Mahoney, M.~W., Shao, Y.~S., Keutzer, K., and Gholami, A.
\newblock Kvquant: Towards 10 million context length llm inference with kv cache quantization.
\newblock \emph{arXiv preprint arXiv:2401.18079}, 2024.

\bibitem[Jegou et~al.(2010)Jegou, Douze, and Schmid]{jegou2010product}
Jegou, H., Douze, M., and Schmid, C.
\newblock Product quantization for nearest neighbor search.
\newblock \emph{IEEE transactions on pattern analysis and machine intelligence}, 33\penalty0 (1):\penalty0 117--128, 2010.

\bibitem[Kamradt(2023)]{niah}
Kamradt, G.
\newblock Needle in a haystack - pressure testing llms., 2023.
\newblock \url{https://github.com/gkamradt/LLMTest_NeedleInAHaystack}.

\bibitem[Kang et~al.(2024)Kang, Zhang, Kundu, Jeong, Liu, Krishna, and Zhao]{kang2024gear}
Kang, H., Zhang, Q., Kundu, S., Jeong, G., Liu, Z., Krishna, T., and Zhao, T.
\newblock Gear: An efficient kv cache compression recipefor near-lossless generative inference of llm.
\newblock \emph{arXiv preprint arXiv:2403.05527}, 2024.

\bibitem[Kaplan et~al.(2020)Kaplan, McCandlish, Henighan, Brown, Chess, Child, Gray, Radford, Wu, and Amodei]{kaplan2020scaling}
Kaplan, J., McCandlish, S., Henighan, T., Brown, T.~B., Chess, B., Child, R., Gray, S., Radford, A., Wu, J., and Amodei, D.
\newblock Scaling laws for neural language models.
\newblock \emph{arXiv preprint arXiv:2001.08361}, 2020.

\bibitem[Khattab \& Zaharia(2020)Khattab and Zaharia]{khattab2020colbert}
Khattab, O. and Zaharia, M.
\newblock Colbert: Efficient and effective passage search via contextualized late interaction over bert.
\newblock In \emph{Proceedings of the 43rd International ACM SIGIR conference on research and development in Information Retrieval}, pp.\  39--48, 2020.

\bibitem[Kim et~al.(2024)Kim, Park, Cho, and Papailiopoulos]{kim2024lexico}
Kim, J., Park, J., Cho, J., and Papailiopoulos, D.
\newblock Lexico: Extreme kv cache compression via sparse coding over universal dictionaries.
\newblock \emph{arXiv preprint arXiv:2412.08890}, 2024.

\bibitem[Kim et~al.(2023)Kim, Hooper, Gholami, Dong, Li, Shen, Mahoney, and Keutzer]{kim2023squeezellm}
Kim, S., Hooper, C., Gholami, A., Dong, Z., Li, X., Shen, S., Mahoney, M.~W., and Keutzer, K.
\newblock Squeezellm: Dense-and-sparse quantization.
\newblock \emph{arXiv preprint arXiv:2306.07629}, 2023.

\bibitem[Li et~al.(2024)Li, Huang, Yang, Venkitesh, Locatelli, Ye, Cai, Lewis, and Chen]{li2024snapkv}
Li, Y., Huang, Y., Yang, B., Venkitesh, B., Locatelli, A., Ye, H., Cai, T., Lewis, P., and Chen, D.
\newblock Snapkv: Llm knows what you are looking for before generation.
\newblock \emph{arXiv preprint arXiv:2404.14469}, 2024.

\bibitem[Lin et~al.(2024)Lin, Tang, Tang, Yang, Chen, Wang, Xiao, Dang, Gan, and Han]{lin2024awq}
Lin, J., Tang, J., Tang, H., Yang, S., Chen, W.-M., Wang, W.-C., Xiao, G., Dang, X., Gan, C., and Han, S.
\newblock Awq: Activation-aware weight quantization for on-device llm compression and acceleration.
\newblock \emph{Proceedings of Machine Learning and Systems}, 6:\penalty0 87--100, 2024.

\bibitem[Liu et~al.(2024{\natexlab{a}})Liu, Desai, Liao, Wang, Xie, Xu, Kyrillidis, and Shrivastava]{liu2024scissorhands}
Liu, Z., Desai, A., Liao, F., Wang, W., Xie, V., Xu, Z., Kyrillidis, A., and Shrivastava, A.
\newblock Scissorhands: Exploiting the persistence of importance hypothesis for llm kv cache compression at test time.
\newblock \emph{Advances in Neural Information Processing Systems}, 36, 2024{\natexlab{a}}.

\bibitem[Liu et~al.(2024{\natexlab{b}})Liu, Yuan, Jin, Zhong, Xu, Braverman, Chen, and Hu]{liu2024kivi}
Liu, Z., Yuan, J., Jin, H., Zhong, S., Xu, Z., Braverman, V., Chen, B., and Hu, X.
\newblock Kivi: A tuning-free asymmetric 2bit quantization for kv cache.
\newblock \emph{arXiv preprint arXiv:2402.02750}, 2024{\natexlab{b}}.

\bibitem[Lloyd(1982)]{lloyd1982least}
Lloyd, S.
\newblock Least squares quantization in pcm.
\newblock \emph{IEEE transactions on information theory}, 28\penalty0 (2):\penalty0 129--137, 1982.

\bibitem[Max(1960)]{max1960quantizing}
Max, J.
\newblock Quantizing for minimum distortion.
\newblock \emph{IRE Transactions on Information Theory}, 6\penalty0 (1):\penalty0 7--12, 1960.

\bibitem[Panter \& Dite(1951)Panter and Dite]{panter1951quantization}
Panter, P. and Dite, W.
\newblock Quantization distortion in pulse-count modulation with nonuniform spacing of levels.
\newblock \emph{Proceedings of the IRE}, 39\penalty0 (1):\penalty0 44--48, 1951.

\bibitem[Pennington et~al.(2014)Pennington, Socher, and Manning]{pennington2014glove}
Pennington, J., Socher, R., and Manning, C.
\newblock {G}lo{V}e: Global vectors for word representation.
\newblock In Moschitti, A., Pang, B., and Daelemans, W. (eds.), \emph{Proceedings of the 2014 Conference on Empirical Methods in Natural Language Processing ({EMNLP})}, pp.\  1532--1543, Doha, Qatar, October 2014. Association for Computational Linguistics.
\newblock \doi{10.3115/v1/D14-1162}.
\newblock URL \url{https://aclanthology.org/D14-1162/}.

\bibitem[Santhanam et~al.(2021)Santhanam, Khattab, Saad-Falcon, Potts, and Zaharia]{santhanam2021colbertv2}
Santhanam, K., Khattab, O., Saad-Falcon, J., Potts, C., and Zaharia, M.
\newblock Colbertv2: Effective and efficient retrieval via lightweight late interaction.
\newblock \emph{arXiv preprint arXiv:2112.01488}, 2021.

\bibitem[Shah et~al.(2024)Shah, Bikshandi, Zhang, Thakkar, Ramani, and Dao]{shah2024flashattention}
Shah, J., Bikshandi, G., Zhang, Y., Thakkar, V., Ramani, P., and Dao, T.
\newblock Flashattention-3: Fast and accurate attention with asynchrony and low-precision.
\newblock \emph{arXiv preprint arXiv:2407.08608}, 2024.

\bibitem[Shannon(1948)]{shannon1948mathematical}
Shannon, C.~E.
\newblock A mathematical theory of communication.
\newblock \emph{The Bell system technical journal}, 27\penalty0 (3):\penalty0 379--423, 1948.

\bibitem[Shannon et~al.(1959)]{shannon1959coding}
Shannon, C.~E. et~al.
\newblock Coding theorems for a discrete source with a fidelity criterion.
\newblock \emph{IRE Nat. Conv. Rec}, 4\penalty0 (142-163):\penalty0 1, 1959.

\bibitem[Shazeer(2019)]{shazeer2019fast}
Shazeer, N.
\newblock Fast transformer decoding: One write-head is all you need.
\newblock \emph{arXiv preprint arXiv:1911.02150}, 2019.

\bibitem[Su et~al.(2025)Su, Chen, Shen, Wei, Li, Yu, and Yuan]{su2025rotatekvaccuraterobust2bit}
Su, Z., Chen, Z., Shen, W., Wei, H., Li, L., Yu, H., and Yuan, K.
\newblock Rotatekv: Accurate and robust 2-bit kv cache quantization for llms via outlier-aware adaptive rotations, 2025.
\newblock URL \url{https://arxiv.org/abs/2501.16383}.

\bibitem[Team et~al.(2024)Team, Georgiev, Lei, Burnell, Bai, Gulati, Tanzer, Vincent, Pan, Wang, et~al.]{team2024gemini}
Team, G., Georgiev, P., Lei, V.~I., Burnell, R., Bai, L., Gulati, A., Tanzer, G., Vincent, D., Pan, Z., Wang, S., et~al.
\newblock Gemini 1.5: Unlocking multimodal understanding across millions of tokens of context.
\newblock \emph{arXiv preprint arXiv:2403.05530}, 2024.

\bibitem[Thakur et~al.(2021)Thakur, Reimers, R{\"u}ckl{\'e}, Srivastava, and Gurevych]{thakur2021beir}
Thakur, N., Reimers, N., R{\"u}ckl{\'e}, A., Srivastava, A., and Gurevych, I.
\newblock {BEIR}: A heterogeneous benchmark for zero-shot evaluation of information retrieval models.
\newblock In \emph{Thirty-fifth Conference on Neural Information Processing Systems Datasets and Benchmarks Track (Round 2)}, 2021.
\newblock URL \url{https://openreview.net/forum?id=wCu6T5xFjeJ}.

\bibitem[Vaswani et~al.(2017)Vaswani, Shazeer, Parmar, Uszkoreit, Jones, Gomez, Kaiser, and Polosukhin]{vaswani2017attention}
Vaswani, A., Shazeer, N., Parmar, N., Uszkoreit, J., Jones, L., Gomez, A.~N., Kaiser, L., and Polosukhin, I.
\newblock Attention is all you need.
\newblock \emph{NeurIPS}, 2017.

\bibitem[Vershynin(2018)]{vershynin2018high}
Vershynin, R.
\newblock \emph{High-dimensional probability: An introduction with applications in data science}, volume~47.
\newblock Cambridge university press, 2018.

\bibitem[Wang et~al.(2017)Wang, Zhang, Sebe, Shen, et~al.]{wang2017survey}
Wang, J., Zhang, T., Sebe, N., Shen, H.~T., et~al.
\newblock A survey on learning to hash.
\newblock \emph{IEEE transactions on pattern analysis and machine intelligence}, 40\penalty0 (4):\penalty0 769--790, 2017.

\bibitem[Xiao et~al.(2023{\natexlab{a}})Xiao, Lin, Seznec, Wu, Demouth, and Han]{xiao2023smoothquant}
Xiao, G., Lin, J., Seznec, M., Wu, H., Demouth, J., and Han, S.
\newblock Smoothquant: Accurate and efficient post-training quantization for large language models.
\newblock In \emph{International Conference on Machine Learning}, pp.\  38087--38099. PMLR, 2023{\natexlab{a}}.

\bibitem[Xiao et~al.(2023{\natexlab{b}})Xiao, Tian, Chen, Han, and Lewis]{xiao2023efficient}
Xiao, G., Tian, Y., Chen, B., Han, S., and Lewis, M.
\newblock Efficient streaming language models with attention sinks.
\newblock \emph{arXiv preprint arXiv:2309.17453}, 2023{\natexlab{b}}.

\bibitem[Yang et~al.(2024)Yang, Kim, Bae, Kwon, Park, Yang, Kwon, and Lee]{yang2024no}
Yang, J.~Y., Kim, B., Bae, J., Kwon, B., Park, G., Yang, E., Kwon, S.~J., and Lee, D.
\newblock No token left behind: Reliable kv cache compression via importance-aware mixed precision quantization.
\newblock \emph{arXiv preprint arXiv:2402.18096}, 2024.

\bibitem[Yue et~al.(2024)Yue, Yuan, Duanmu, Zhou, Wu, and Nie]{yue2024wkvquant}
Yue, Y., Yuan, Z., Duanmu, H., Zhou, S., Wu, J., and Nie, L.
\newblock Wkvquant: Quantizing weight and key/value cache for large language models gains more.
\newblock \emph{arXiv preprint arXiv:2402.12065}, 2024.

\bibitem[Zador(1964)]{zador1964development}
Zador, P.~L.
\newblock \emph{Development and evaluation of procedures for quantizing multivariate distributions}.
\newblock Stanford University, 1964.

\bibitem[Zandieh et~al.(2024{\natexlab{a}})Zandieh, Daliri, and Han]{qjl}
Zandieh, A., Daliri, M., and Han, I.
\newblock Qjl: 1-bit quantized jl transform for kv cache quantization with zero overhead, 2024{\natexlab{a}}.
\newblock URL \url{https://arxiv.org/abs/2406.03482}.

\bibitem[Zandieh et~al.(2024{\natexlab{b}})Zandieh, Daliri, and Han]{zandieh2024qjl}
Zandieh, A., Daliri, M., and Han, I.
\newblock Qjl: 1-bit quantized jl transform for kv cache quantization with zero overhead.
\newblock \emph{arXiv preprint arXiv:2406.03482}, 2024{\natexlab{b}}.

\bibitem[Zandieh et~al.(2024{\natexlab{c}})Zandieh, Han, Mirrokni, and Karbasi]{zandieh2024subgen}
Zandieh, A., Han, I., Mirrokni, V., and Karbasi, A.
\newblock Subgen: Token generation in sublinear time and memory.
\newblock \emph{arXiv preprint arXiv:2402.06082}, 2024{\natexlab{c}}.

\bibitem[Zhang et~al.(2024{\natexlab{a}})Zhang, Yi, Xu, and Shrivastava]{zhang2024kv}
Zhang, T., Yi, J., Xu, Z., and Shrivastava, A.
\newblock Kv cache is 1 bit per channel: Efficient large language model inference with coupled quantization.
\newblock \emph{arXiv preprint arXiv:2405.03917}, 2024{\natexlab{a}}.

\bibitem[Zhang et~al.(2024{\natexlab{b}})Zhang, Sheng, Zhou, Chen, Zheng, Cai, Song, Tian, R{\'e}, Barrett, et~al.]{zhang2024h2o}
Zhang, Z., Sheng, Y., Zhou, T., Chen, T., Zheng, L., Cai, R., Song, Z., Tian, Y., R{\'e}, C., Barrett, C., et~al.
\newblock H2o: Heavy-hitter oracle for efficient generative inference of large language models.
\newblock \emph{Advances in Neural Information Processing Systems}, 36, 2024{\natexlab{b}}.

\end{thebibliography}
\bibliographystyle{icml2025}

\newpage
\appendix
\onecolumn


\end{document}